\newtheorem{thm}{Theorem}
\newtheorem{Assumption}{Assumption}
\definecolor{forestgreen}{rgb}{0.0, 0.27, 0.13}
\tikzset{every picture/.style={line width=0.75pt}} 
\newcommand{\x}{\mathbf{x}}
\newcommand{\X}{\mathbf{X}}
\renewcommand{\P}{\mathbb{P}}   
\newcommand{\I}{\mathbb{I}}
\newcommand{\E}{\mathbb{E}}
\newcommand{\V}{\mathbb{V}}
\newcommand{\Ind}{\mathbf{1}}
\newcommand{\train}{\textrm{train}}
\newcommand{\calib}{\textrm{cal}}
\newcommand{\ourmethod}{\texttt{EPICSCORE}}
\newcommand{\ourmethodshort}{\textrm{EPIC}}
\title{Epistemic Uncertainty in Conformal Scores: A Unified Approach}
\author[1, 2]{\href{mailto:<lucruz45.cab@gmail.com>?Subject=Your UAI 2025 paper}{Luben~M.~C.~Cabezas}{}}
\author[1]{Vagner~S.~Santos}
\author[1]{Thiago~R.~Ramos}
\author[1]{Rafael~Izbicki}
\affil[1]{%
    Department of Statistics\\
    Federal University of S\~ao Carlos\\
    S\~ao Carlos, S\~ao Paulo, Brazil
}
\affil[2]{%
    Institute of Mathematics and Computer Science\\
    University of S\~ao Paulo\\
    S\~ao Carlos, S\~ao Paulo, Brazil
}
\begin{document}
\maketitle

\begin{abstract}
Conformal prediction methods create prediction bands with distribution-free guarantees but do not explicitly capture epistemic uncertainty, which can lead to overconfident predictions in data-sparse regions.
Although recent conformal scores have been developed to address this limitation, they are typically designed for specific tasks, such as regression or quantile regression. Moreover, they rely on particular modeling choices for epistemic uncertainty, restricting their applicability. We introduce \ourmethod, a model-agnostic approach that enhances any conformal score by explicitly integrating epistemic uncertainty. Leveraging Bayesian techniques such as Gaussian Processes, Monte Carlo Dropout, or Bayesian Additive Regression Trees, \ourmethod \  adaptively expands predictive intervals in regions with limited data while maintaining compact intervals where data is abundant. As with any conformal method, it preserves finite-sample marginal coverage.
Additionally, it also achieves asymptotic conditional coverage. Experiments demonstrate its good performance compared to existing methods. Designed for compatibility with any Bayesian model, but equipped with distribution-free guarantees, \ourmethod\  provides a general-purpose framework for uncertainty quantification in prediction problems.
\end{abstract}

\section{Introduction}\label{sec:intro}

 Machine learning models traditionally focus on point predictions, estimating target variables from input features. However, understanding prediction uncertainty is crucial in many applications \citep{horta2015potential, freeman2017unified,izbicki2017converting,mcallister2017concrete,Schmidt2020Photo-z,selvan2020uncertainty,vazquez2022conformal, csillag2023amnioml,mian2024literature, Valle2024, frohlich2025personalizedus}. This has led to increased interest in uncertainty quantification methods, particularly conformal prediction, which constructs predictive regions with finite-sample validity under mild i.i.d. assumptions \citep{vovk2005algorithmic,shafer2008tutorial}. Unlike probabilistic models that rely on asymptotic assumptions or priors, conformal methods provide a distribution-free framework with guaranteed coverage.
 
Conformal prediction works by designing a non-conformity score, $s(\x,y)$, which measures the degree to which a given label value $y$ aligns with the feature values $\x$ of an instance.  Given a new input $\x_{\text{new}}$, the method constructs a predictive region by inverting the non-conformity score at a given confidence level (see Section \ref{sec:review_conformal}). The choice of $s(\x,y)$ is critical, as it directly influences the shape and informativeness of the resulting predictive regions \citep{angelopoulos2021gentle,Izbicki2025}. For instance, in regression problems, a standard choice is $s(\x,y)=|y-g(\x)|$, where $g(\x)$ is a point prediction for $Y$, typically an estimate of the regression function $\E[Y|\x]$ \citep{Lei2018}. Another common option is $s(\x,y)=\max\{ \widehat q_{\alpha_1}(\x)-y,y- \widehat q_{\alpha_2}(\x)\}$,  where $\widehat q_{\alpha_1},\widehat q_{\alpha_2}$  are quantile estimates of $Y|\x$ \citep{romano2019}.   

Despite offering distribution-free guarantees, standard conformal scores only capture \emph{aleatoric uncertainty}, which arises from inherent randomness in the data generation process - specifically, the fact that $\x$ does not uniquely determine $y$. For example, in the cases discussed above, both $E[Y|\x]$ and $q_\alpha(\x)$ reflect this form of uncertainty. However, an equally important source of uncertainty is \emph{epistemic uncertainty}, which stems from limitations in training data and the resulting lack of knowledge about the true data-generating process \citep{hullermeier2021aleatoric}.

Although conformal sets do take epistemic uncertainty into account \citep{angelopoulos2021gentle}
, the  conformal scores themselves typically do not. It follows that conformal inference often leads to misleadingly narrow predictive intervals in regions with little or no training data. Figure \ref{fig::reg_split} illustrates this issue in a regression problem (see Appendix \ref{sec::technicalReg} for technical details): in the range $x \in (7,8)$, there is essentially no training data, so we expect predictive regions to widen, reflecting the increased uncertainty. However, standard conformal methods (e.g., regression-split) instead produce overconfident, narrow intervals in this region. To address this limitation, we propose a novel approach that augments any conformal score with a measure of epistemic uncertainty. As shown in the figure, our method (\ourmethod) successfully expands the predictive regions where data is scarce, providing a more reliable uncertainty quantification.
\begin{figure}[h]
    \centering
    \begin{adjustbox}{width=\columnwidth, center}
        \includegraphics[width=\columnwidth]{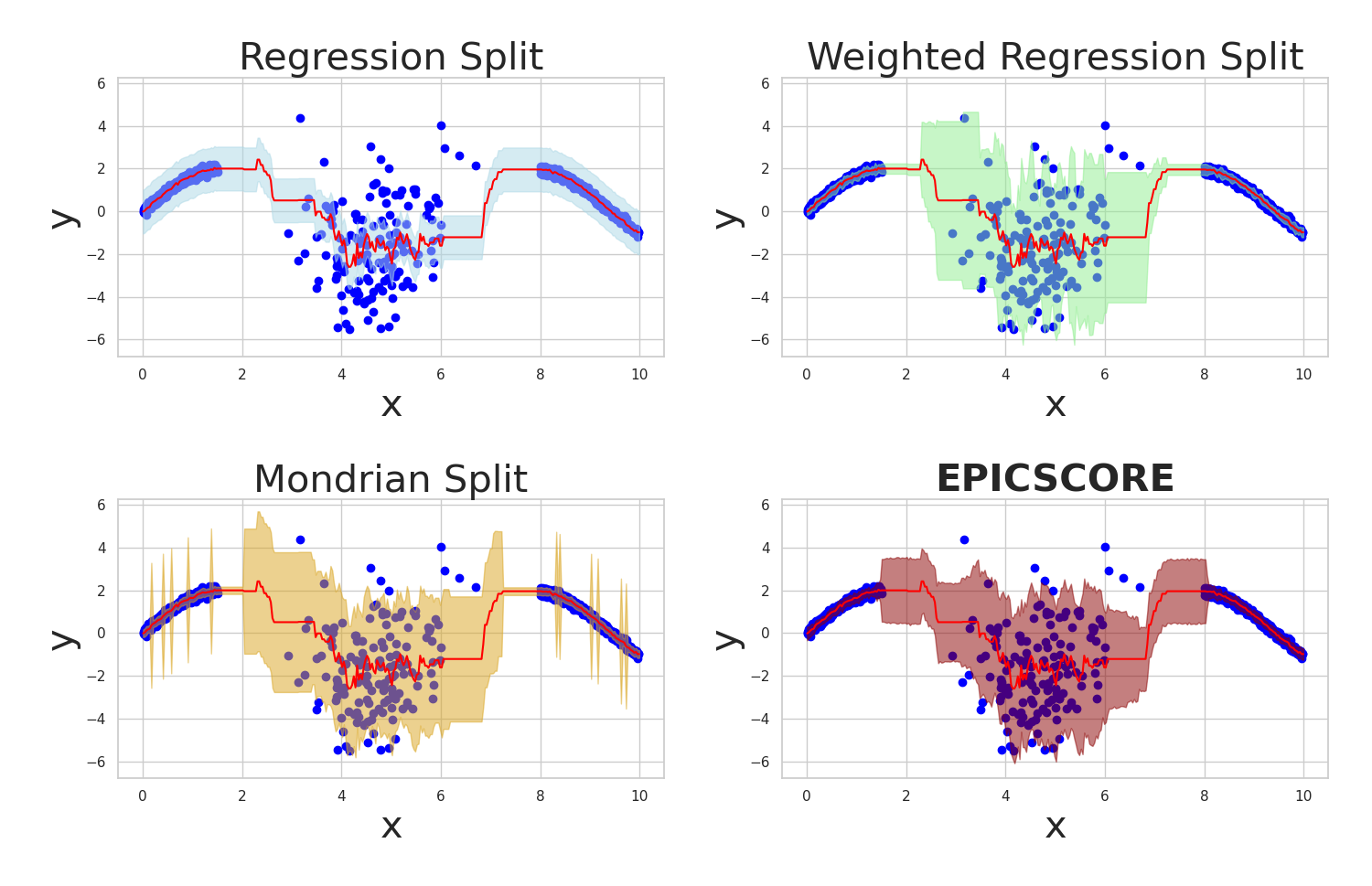}
    \end{adjustbox}
\caption{A comparison of predictive intervals from standard split-conformal regression and our proposed \ourmethod\ approach. While all methods maintain valid marginal coverage, standard conformal prediction often produces overconfidently narrow intervals in the data-scarce region (e.g. $x \in (7,8)$). Our method explicitly accounts for epistemic uncertainty, resulting in appropriately widened predictive intervals that better reflect total uncertainty when extrapolating beyond the training distribution.}
    \label{fig::reg_split}
\end{figure}

This limitation also appears in classification tasks, where conventional conformal methods produce overconfident prediction sets for test instances outside the training distribution. Using a ResNet-34 pre-trained on ImageNet (see Appendix \ref{sec::technicalClass} for details and additional results), Figure \ref{fig::images} compares Adaptive Prediction Sets (APS, \citealt{romano2020classification}) and \ourmethod\ on CIFAR-100 images. Both maintain valid coverage, but \ourmethod\ explicitly quantifies epistemic uncertainty: prediction sets expand for outliers while remaining concise for in-distribution examples. \ourmethod\ also shows improved Size-Stratified Coverage (SSC; \citealt{angelopoulos2021gentle}) by 33\% over APS, thus producing better prediction sets.  

\begin{figure*}[ht]
    \centering
        \includegraphics[width = 0.8\textwidth]{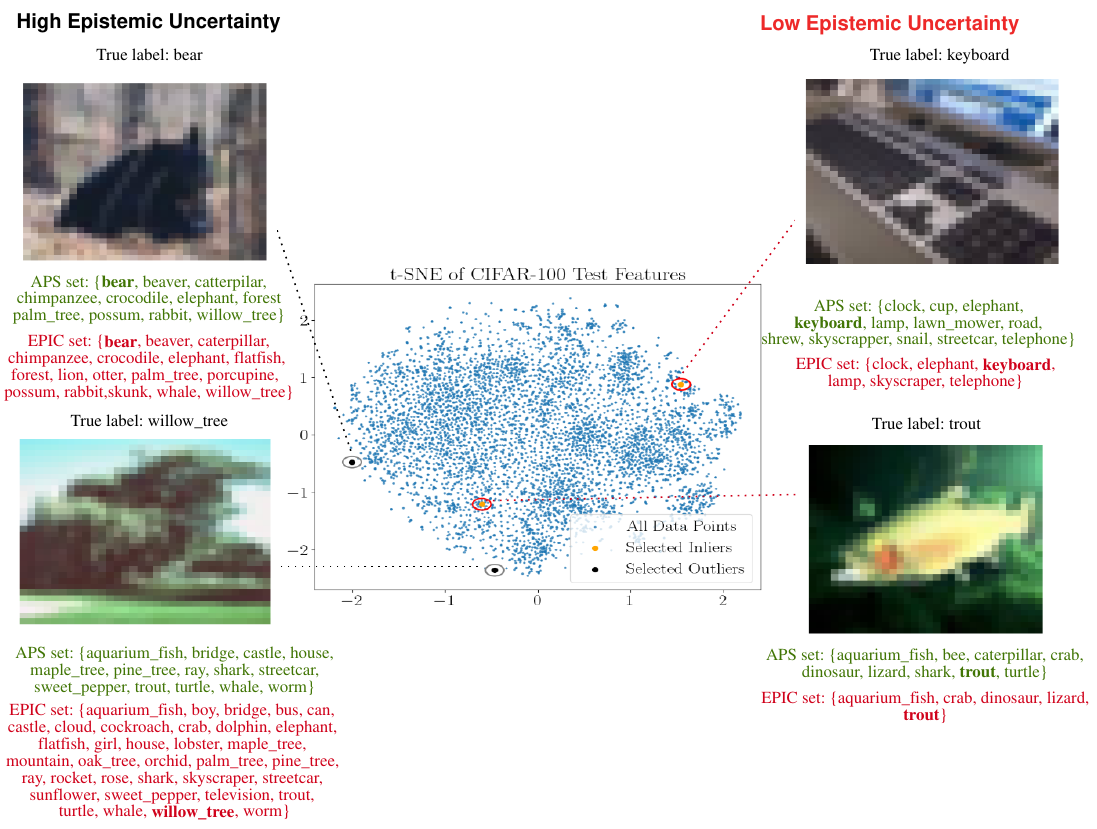}
\caption{Prediction sets from Adaptive Prediction Sets (APS) versus the proposed \ourmethod\ approach on CIFAR-100 images. Both methods maintain valid coverage, but \ourmethod\ explicitly quantifies epistemic uncertainty, resulting in adaptively expanded prediction sets for outlier images (e.g., those in data-sparse regions) while remaining concise for in-distribution examples. }
    \label{fig::images}
\end{figure*}

\subsection{Novelty}

We introduce \ourmethod \ (Epistemic Conformal Score), a novel nonconformity score that explicitly integrates epistemic uncertainty into the conformal prediction framework. Our key innovation is modeling the epistemic uncertainty of any given conformal score $s(\x,y)$ using a Bayesian process. This allows us to refine uncertainty quantification, particularly in regions of the feature space where data are sparse.

\ourmethod\ is  flexible and can leverage various Bayesian approaches, including Gaussian Processes, Bayesian Additive Regression Trees, or even approximations such as Neural Networks with Monte Carlo Dropout. By incorporating these probabilistic models, our method dynamically adjusts predictive regions to account for both aleatoric and epistemic uncertainty.

Despite \ourmethod\ using Bayesian models, it preserves the finite-sample validity guarantees of conformal prediction and also achieves asymptotic conditional coverage, a property that many existing conformal approaches lack. Furthermore, \ourmethod\  can be applied on top of any existing conformal score, making it an enhancement rather than a replacement.

\subsection{Relation to Other Work}

Several recent frameworks have attempted to model epistemic uncertainty in machine learning predictions (see, e.g., \citealt{he2023survey,tyralis2024review,wang2025aleatoric} and references therein). However, these methods generally lack guarantees on the coverage of their estimates.

In contrast, conformal prediction offers valid coverage guarantees under the relatively weak assumption of i.i.d. data. Since its introduction \citep{vovk2005algorithmic,shafer2008tutorial}, the method has advanced significantly in both theoretical foundations and practical applications \citep{Lei2018,angelopoulos2021gentle,fontana2023conformal,Manokhin2024}. A key challenge in this area has been enhancing the quality of predictive regions, particularly by achieving conditional coverage, which ensures validity conditional on specific feature values.

Since exact conditional coverage is unattainable without strong assumptions \citep{lei2014distribution}, research has focused on two strategies: (1) locally tuning cutoffs to adapt to the data distribution \citep{bostrom2020mondrian,foygel2021limits,guan2023localized, cabezas2024distribution,cabezas2025regression}, and (2) designing conformal scores that achieve conditional coverage asymptotically \citep{romano2019,izbicki2020flexible,chernozhukov2021distributional,izbicki2022cd,plassier2024conditionally}. Among these, 
the score introduced by \citet[Eq.~14]{dheur2025multi} is particularly relevant to our approach.
 This method
   transforms a nonconformity score $s$ via its estimated  cumulative distribution function, $s'(\x;y) = \widehat{F}(s(\x,y) | \x)$, improving conditional coverage. While $s'$ ensures asymptotic conditional coverage, it does not explicitly model epistemic uncertainty. Our approach builds on this idea by incorporating epistemic uncertainty in $s'$. In particular, we show that as the calibration sample grows, \ourmethod\ achieves asymptotic conditional coverage (Theorem~\ref{thm::conditional_coverage}).

As in \ourmethod, Bayesian ideas are often used in non-conformal frequentist inference, including what \citet{wasserman2011frasian}  terms ``Frasian inference''. Similar ideas appear in I. J. Good’s (\citealt{good1992bayes} and others) advocacy for a Bayes/non-Bayes compromise, in recent work on likelihood-free frequentist inference \citep{masserano2023simulator,dalmasso2024likelihood} and Bayes-optimal sets with frequentist coverage control \citep{hoff2023bayes}.   While \ourmethod's base model is Bayesian, the final conformal regions remain distribution-free and retain their frequentist distribution--free coverage guarantees, whichever the prior is.

Bayesian methods have also been explored as a means to incorporate epistemic uncertainty into conformal prediction. One approach is to use Bayesian predictive sets and subsequently apply conformal methods to adjust their marginal coverage. However, existing techniques typically do not leverage existing conformal scores explicitly, and do not lead to asymptotic conditional coverage  \citep{vovk2005algorithmic,wasserman2011frasian,fong2021conformal}.

Other recent studies have attempted to incorporate epistemic uncertainty directly into existing conformal scores. 
\citet{cocheteux2025uncertainty} modifies the Weighted regression-split nonconformity score \citep{Lei2018},   $s(\x,y) = |y - g(\x)| / \sigma(\x)$, by redefining 
$\sigma(\x)$ to capture epistemic uncertainty about $Y$, rather than aleatoric uncertainty only as in the original formulation. This uncertainty 
is estimated via Monte Carlo Dropout, which we also employ in some experiments. However, our approach is more flexible, as it accommodates 
any nonconformity score and, unlike this method, ensures asymptotic conditional coverage.

Another approach is proposed by \citet{rossellini2024integrating}, who modify the conformal 
quantile regression (CQR; \citealt{romano2019}) score function, $s(\x,y) = \max\{\widehat{q}_{\alpha_1}(\x) - y, y - \widehat{q}_{\alpha_2}(\x)\}$, to
account for epistemic uncertainty in the quantile estimates. While this adaptation shows promising results, it is restricted to quantile regression 
models and does not generalize to other conformal scores. Moreover, it necessarily measures epistemic uncertainty using ensembles.

Other efforts to model epistemic uncertainty include \citet{jaber2024conformal} and \citet{pion2024gaussian}, which 
integrate conformal methods with Gaussian processes. Although these approaches yield calibrated sets for Gaussian processes, 
they are  tailored to this class of models and cannot be applied to other frameworks, such as Bayesian Additive Regression 
Trees \citep{chipman2010bart}. In contrast, \ourmethod\ is model-agnostic (in the sense that it can be used on top of any nonconformity score) and can be applied to any Bayesian model for epistemic uncertainty.

\section{Methodology}\label{sec:methodology}

\subsection{Review of Conformal Prediction}
\label{sec:review_conformal}

Conformal prediction constructs valid prediction regions $R(\X_{n+1})$ under minimal assumptions. A widely used approach is split conformal prediction \citep{papadopoulos2002inductive,Lei2018}, which partitions the data into a training set $\mathcal{D}_\train$ and a calibration set $\mathcal{D}_\calib=\{(\X_1,Y_1), \dots, (\X_n,Y_n)\}$.  The training set is used to fit a nonconformity score $s(\x,y)$ such as those described in the introduction.
The conformal prediction region is given by
$$
R(\x_{n+1}) = \{ y : s(\x_{n+1}, y) \leq t_{1-\alpha} \}.
$$
The value $t_{1-\alpha}$ is set using the calibration set. Concretely,
$t_{1-\alpha}$ is set to be the $(1-\alpha)$-quantile of the calibration scores,
\[
t_{1-\alpha} = \text{Quantile}_{1-\alpha} \{ s(\X_i, Y_i) : (\X_i, Y_i) \in \mathcal{D}_\calib \}.
\]
This construction  ensures that the prediction set for a new observation $(\X_{n+1}, Y_{n+1})$ satisfies marginal coverage:
\[
\mathbb{P} (Y_{n+1} \in R(\X_{n+1})) \geq 1 - \alpha.
\]
 
 The choice of $s$ is critical in determining the shape and other properties of $R$. In the next section, we introduce a new conformal score that measures the epistemic uncertainty around any given score $s$. 

\subsection{Our Approach - \ourmethod}

We assume that a nonconformity score $s(\x, y)$ is already defined based on the training set.
Our starting point is to define a family of distributions
 that model the aleatoric uncertainty of $s(\X,Y)$ given $\X$, which is a set of distributions indexed by a parameter $\theta \in \Theta$. We denote this family by
  $\mathcal{F} = \{f(s|\x,\theta): \theta \in \Theta\}$. 
  This formulation is very general; $\Theta$ may even represent a nonparametric space.

To construct \ourmethod, we adopt a Bayesian model that places a prior distribution over $\Theta$ (or equivalently, over $\mathcal{F}$). For simplicity, we assume this prior has a density $f(\theta)$, though the method is generally applicable.  The choice of prior plays a crucial role: it can be tailored to capture epistemic uncertainty in the data-generating process (via uninformative or weakly informative priors) or to introduce stronger regularization into the conformal score distribution (via more concentrated priors). In this work, we focus on enhancing the representation of epistemic uncertainty and therefore adopt non-informative priors. However, a balance between capturing uncertainty and enforcing regularization can be explored through prior specification (see Appendix \ref{sec::prior_comparisson} for a discussion on this trade-off).

In our experiments, we use Gaussian Processes (GP), Bayesian Additive Regression Trees (BART), and approximate Bayesian Mixture Density Networks (MDNs) with Monte Carlo dropout. However, \ourmethod{} is compatible with any Bayesian modeling approach that yields a posterior predictive distribution.  These three models were chosen for their complementary strengths: BART offers flexibility and interpretability, naturally handling categorical features; GPs provide expressive nonparametric inference with scalable tractability via variational approximations; and MDNs with Monte Carlo dropout are highly scalable, making them well-suited for large-scale or high-dimensional problems.

We update the prior distribution \( f(\theta) \) using a subset of the calibration set \(\mathcal{D}_\calib\). Specifically, \(\mathcal{D}_\calib\) is split into two disjoint subsets: \(\mathcal{D}_{\calib, 1}\) and \(\mathcal{D}_{\calib, 2}\).  The first subset, \(\mathcal{D}_{\calib, 1}\) is transformed into the  dataset 
\[
D = \{(\mathbf{X}, S) : (\mathbf{X}, Y) \in \mathcal{D}_{\calib, 1}, \, S = s(\mathbf{X}, Y)\}.
\]
Using this transformed dataset, we compute the posterior distribution $f(\theta|D)$,   which reflects the updated epistemic uncertainty about the data-generating process after observing this data.
Our Bayesian model assumes that, given $\theta$, the data points $(\X,S)$ are independent and share the same conditional distribution $f(s|\x,\theta)$. 
Then, we derive the predictive cumulative distribution
\begin{align}
\label{eq:predictive_conf_score}
    F(s|\x,D)=\int F(s|\x,\theta)f(\theta|D)d\theta,
\end{align}
where $F(s|\x,\theta)$ is the CDF given by model $\theta$.

The posterior $f(\theta|D)$ is not computed explicitly. Instead,  Eq.~\ref{eq:predictive_conf_score} is approximated either via Monte Carlo sampling (as with BART and MC-Dropout) or through a  variational approximation of $f(\theta|D)$ (as in the variational GP).

Finally, our modified  nonconformity score, \ourmethod, is defined
as
\begin{align}
\label{eq:ourscore}
    s'(\x,y)=F(s(\x,y)|\x,D).
\end{align}
By construction, $s'$ incorporates epistemic uncertainty into $s$ by averaging the original score distribution, $F(s|\x,\theta)$, over the posterior $f(\theta|D)$, thus propagating the uncertainty about $\theta$ throughout the model.

Once $s'$ is computed, the prediction region for a new sample point is obtained using the standard split conformal method, with $s'$ serving as the nonconformity score. Specifically, $s'$ is evaluated for every sample in the second subset, $\mathcal{D}_{\calib, 2}$. The $(1-\alpha)$-quantile of these values, denoted as \( t_{1-\alpha} \), is then used to construct the prediction region
\[
R_{\ourmethodshort}(\x_{n+1}) = \{y: s'(\x_{n+1},y) \leq t_{1-\alpha} \},
\]
which, by the definition of $s'$, can be expressed in terms of the original nonconformity score $s$ as
\[
R_{\ourmethodshort}(\x_{n+1}) = \{y: s(\x_{n+1},y) \leq F^{-1}(t_{1-\alpha}|\x_{n+1},D) \}.
\] 


\ourmethod{} is summarized in Algorithm~\ref{alg:epicscore} and  illustrated in Figure~\ref{fig:EPICSCORE_scheme}. Additional details on prior specification and strategies for managing its influence are provided in Appendix~\ref{sec::prior_comparisson}. A discussion of the computational complexity introduced by incorporating Bayesian models into our framework is presented in Appendix~\ref{sec::computational_complexity}.

\begin{figure*}[ht]
    \centering
    \includegraphics[width = 1.0\textwidth]{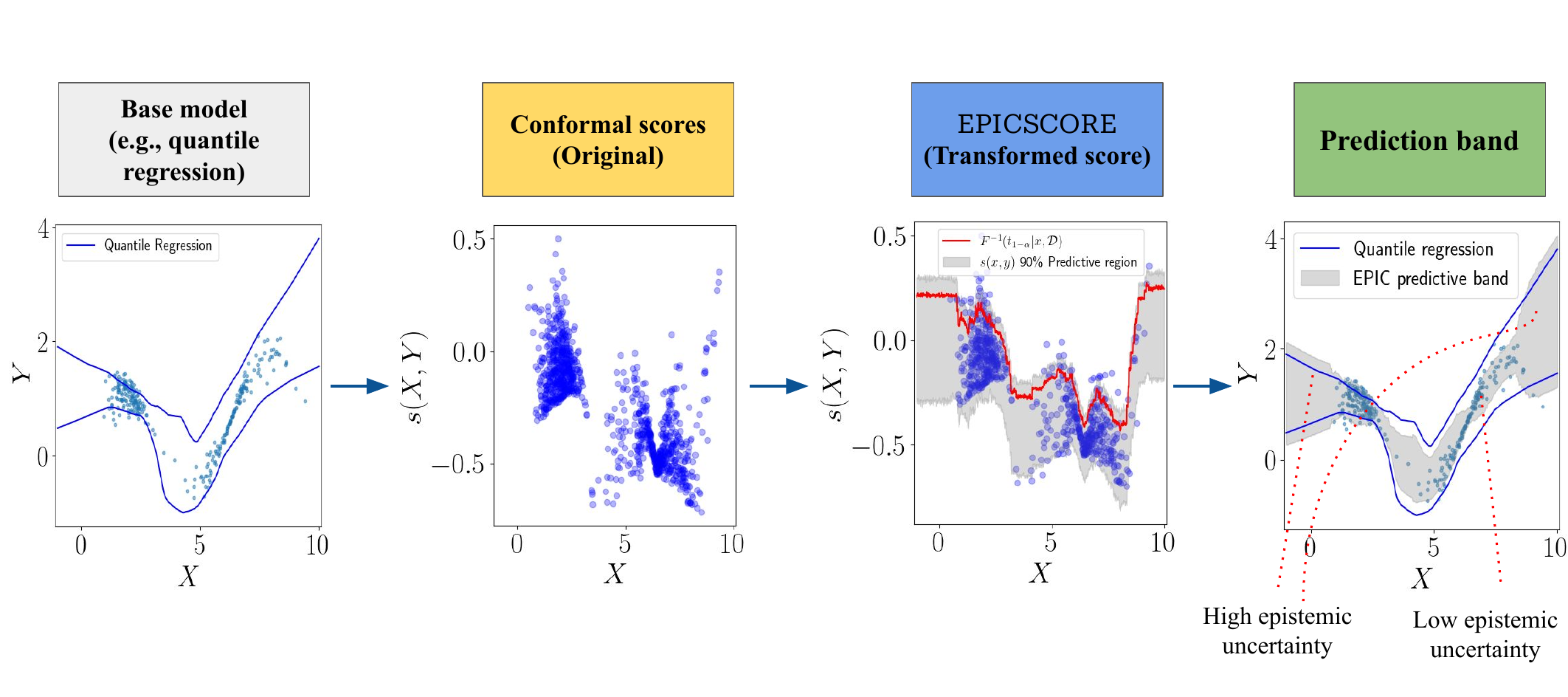}
    \caption{\ourmethod{} schematic illustration: Given a fitted base model (first panel), we begin by creating a nonconformity score and evaluating it over the calibration set (second panel). We then model the predictive distribution of the conformal score $s(\X, Y)$ using a specified family of models, integrating the epistemic uncertainty about the data-generating process. The predictive CDF of each original score defines a new conformal score, allowing threshold computation in the transformed space (third panel). Finally, leveraging these predictive-based cutoffs, we construct an adaptive prediction band that accounts for epistemic uncertainty (fourth panel).}
    \label{fig:EPICSCORE_scheme}
\end{figure*}

\begin{algorithm}[ht]
\caption{\ourmethod}
\label{alg:epicscore}
\KwIn{Data $\mathcal{D} = \{(\X_i,Y_i)\}_{i =1}^n$, conformal score $s(\X, Y)$, nominal level $\alpha$, test point $\X_{n + 1}$}

\textbf{Step I: Fit conformal scores} \\
1: Split data $\mathcal{D}$ into a training set $\mathcal{D}_{\text{train}}$ and a calibration set $\mathcal{D}_{\text{cal}}$. \\
2: Fit the conformal score $s(\X, Y)$ in $\mathcal{D}_{\text{train}}$.

\textbf{Step II: Fit the predictive function}\\
1: Split data $\mathcal{D}_{\text{cal}}$ into a training set $\mathcal{D}_{\text{cal},1}$ and a calibration set $\mathcal{D}_{\text{cal}, 2}$. \\
2. Fit predictive CDF $F(s|\x,D)$ using $\mathcal{D}_{\text{cal},1}$. \\
3. Compute \ourmethod{} conformal score $s'(\x,y)$ for all elements of $\mathcal{D}_{\text{cal}, 2}$ (Eq.~\eqref{eq:ourscore}) \\
4: Compute the $(1 - \alpha)$ empirical quantile $t_{1-\alpha}$ of the conformal scores. \\

\textbf{Step III: Compute prediction set}\\
3: Compute the set ${R}_{\ourmethodshort}(\X_{n + 1})$ as:
   \begin{align*}
   {R}_{\ourmethodshort}(&\X_{n + 1}) = \{y \mid s'(\X_{n + 1}, y) \leq t_{1-\alpha} \} \\
   &= \{y \mid s(\X_{n + 1},y) \leq F^{-1}(t_{1- \alpha}|\X_{n + 1}, D) \}
   \end{align*}
\end{algorithm}

\subsubsection{Special Cases}
\label{sec::special_cases}
We examine specific instances of conformal scores to provide further insight into how \ourmethod\ captures epistemic uncertainty.

\textbf{Regression}. 
If the original conformal score is $s(\x,y)=|y-g(\x)|$, 
the prediction regions given by \ourmethod\ have the form
\[ g(\x_{n+1}) \pm F^{-1}(t_{1-\alpha}|\x_{n+1},D),\]
In particular, if $S|\x,D$ is modeled by a normal distribution with mean $\mu(\x,D)$ and standard deviation $\sigma(\x,D)$  the prediction sets will have the shape
$$\left( g(\x_{n+1})-\mu(\x_{n+1},D)\right) \pm \sigma (\x,D)\sqrt{2} \text{erf}^{-1}(2t_{1-\alpha} - 1),$$
where $\text{erf}^{-1}$ denotes the inverse error function. 
 This is equivalent to changing the original conformal score to
$|y-g(\x_{n+1})-\mu(\x_{n+1},D)|/\sigma(\x_{n+1},D)$, which is similar to the approach by \citet{cocheteux2025uncertainty}, although any process can be used to model the epistemic uncertainty in our version.

\textbf{Quantile Regression}. 
If the original conformal score \( s \) is given by Conformalized Quantile Regression (CQR) \citep{romano2019}, the prediction regions of \ourmethod\ have the form
\begin{equation*}
\begin{split}
[q_{\alpha_1}(\x_{n+1}) - F^{-1}(t_{1-\alpha}|\x_{n+1}, D), \\
q_{\alpha_2}(\x_{n+1}) + F^{-1}(t_{1-\alpha}|\x_{n+1}, D)].
\end{split}
\end{equation*}
Unlike the original CQR formulation, which expands or contracts the quantile regions $[q_{\alpha_1}(\x_{n+1}),q_{\alpha_2}(\x_{n+1})]$ by a constant factor \( t \), \ourmethod\ adjusts the regions dynamically based on the epistemic uncertainty at \( \x_{n+1} \).  
This approach is similar, but more flexible than previous methods, such as UACQR-S \citep{rossellini2024integrating}, which imposes a correction factor of the form \( t \times g(\x_{n+1}) \).  Also, any process can be used to model
$F^{-1}(t_{1-\alpha}|\x_{n+1}, D)$.

\textbf{Classification}.  
Let $s(\x,y)$ be any nonconformity score for classification. One example is the APS score
\begin{align}\label{eq:aps}s(\mathbf{x}, y) = \sum_{y' \in \mathcal{Y}: \widehat{\P}(y'|\mathbf{x}) > \widehat{\P}(y|\mathbf{x})} \widehat{\P}(y'|\mathbf{x}),
\end{align}
where $\widehat{\P}(y'|\mathbf{x})$ represents the predicted probabilities from any classifier. Another common choice is
\begin{align}\label{eq:cdsplit}s(\mathbf{x}, y) = - \widehat{\P}(y|\mathbf{x}),
\end{align} \citep{vovk2005algorithmic, valle2023quantifying}.
Since $Y$ is discrete, the score $s(\mathbf{x}, Y)$ is also discrete. Moreover, the cumulative distribution function of its predictive distribution can be computed using the predictive distribution of the labels, $\P(y|\x,D)$. In particular, \ourmethod\ is given by
\begin{align*}
    s'(\x,y)&= \P \left(s(\x,Y) \leq s(\x,y)|\x,D \right)\\
    &=\sum_{y'}\I(s(\x,y') \leq s(\x,y)) \P(y'|\x,D)\\
    &=\sum_{y': s(\x,y') \leq s(\x,y)} \P(y'|\x,D).
\end{align*}
This formulation reveals several key insights about \ourmethod\ for classification:
\begin{itemize}

\item If the initial classifier $\widehat \P(y|\x)$ is a neural network trained with Monte Carlo dropout or batch normalization, an approximation to $\P(y|\x,D)$ is readily available; one only needs to use the same technique at test time.  This is because these methods provide a variational approximation of Bayesian predictive distributions \citep{gal2016dropout, teye2018bayesian}, eliminating the need to compute the predictive distribution using a separate holdout set.

\item Both scoring functions in Eqs.~\ref{eq:aps} and \ref{eq:cdsplit} lead to the same \ourmethod\ score. Additionally, \ourmethod\ follows a similar structure to APS, with the key difference being that the estimates $\widehat{\P}(y|\mathbf{x})$ on the sum of Eq.~\ref{eq:aps} are replaced by the predictive distribution $\P(y|\x,D)$.

\item For large $D$, no epistemic uncertainty remains and, therefore, \ourmethod \ converges to the populational version of the APS score.
\end{itemize}

\section{Theory}\label{sec:theory}

Just like any split-conformal method, \ourmethod\ guarantees marginal coverage as long as the data are exchangeable, regardless of the chosen Bayesian model (see Appendix \ref{sec:proofs} for all proofs):

\begin{thm}
\label{thm::marginal}
Assuming that the data are exchangeable, the confidence region constructed by \ourmethod~ satisfies marginal coverage, that is,
\[
\P\left(Y \in {R}_{\ourmethodshort}(\X) \right) \geq  1 - \alpha.
\]
Moreover, if the fitted scores follow a continuous joint distribution, the upper bound also holds:
\[
\P\left(Y \in {R}_{\ourmethodshort}(\X) \right) \leq  1 - \alpha + \frac{1}{1 + |\mathcal{D}_{\calib, 2}|}.
\]
\end{thm}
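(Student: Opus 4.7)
The plan is to reduce the claim directly to the standard split-conformal guarantee by recognizing that, once the training set $\mathcal{D}_\train$ and the first calibration subset $\mathcal{D}_{\calib,1}$ are fixed, the modified nonconformity score $s'(\x,y) = F(s(\x,y)\mid \x, D)$ is a deterministic function of $(\x,y)$. Consequently, Step~III of Algorithm~\ref{alg:epicscore} is precisely a split-conformal procedure with nonconformity score $s'$, calibration set $\mathcal{D}_{\calib,2}$, and test point $(\X_{n+1},Y_{n+1})$.

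First, I would condition on $(\mathcal{D}_\train,\mathcal{D}_{\calib,1})$ and argue that the random variables $\{s'(\X_i,Y_i) : (\X_i,Y_i)\in \mathcal{D}_{\calib,2}\} \cup \{s'(\X_{n+1},Y_{n+1})\}$ are i.i.d., since (i) the original data are i.i.d. by assumption, (ii) $s'$ is a fixed measurable function after conditioning, and (iii) $\mathcal{D}_{\calib,2}$ and $(\X_{n+1},Y_{n+1})$ are independent of $(\mathcal{D}_\train,\mathcal{D}_{\calib,1})$. Under i.i.d. (hence exchangeability), the rank of $s'(\X_{n+1},Y_{n+1})$ among the $|\mathcal{D}_{\calib,2}|+1$ scores is uniform on $\{1,\ldots,|\mathcal{D}_{\calib,2}|+1\}$ (with ties broken arbitrarily in the inequality-only direction).

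Next, for the lower bound I would invoke the standard split-conformal lemma: setting $t_{1-\alpha}$ to the $\lceil (1-\alpha)(|\mathcal{D}_{\calib,2}|+1)\rceil$-th order statistic of $\{s'(\X_i,Y_i)\}$ yields
\begin{equation*}
\P\bigl(s'(\X_{n+1},Y_{n+1}) \leq t_{1-\alpha} \mid \mathcal{D}_\train,\mathcal{D}_{\calib,1}\bigr) \geq 1-\alpha,
\end{equation*}
and marginalizing over $(\mathcal{D}_\train,\mathcal{D}_{\calib,1})$ preserves the inequality, giving $\P(Y_{n+1}\in R_{\ourmethodshort}(\X_{n+1}))\geq 1-\alpha$ by the equivalent definition $R_{\ourmethodshort}(\x_{n+1})=\{y: s'(\x_{n+1},y)\leq t_{1-\alpha}\}$. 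For the upper bound, continuity of the joint distribution of $\{s'(\X_i,Y_i)\}_{i\in \mathcal{D}_{\calib,2}\cup\{n+1\}}$ rules out ties almost surely, so the rank of $s'(\X_{n+1},Y_{n+1})$ is exactly uniform, yielding $\P(\cdot)\leq 1-\alpha + 1/(1+|\mathcal{D}_{\calib,2}|)$ by the classical argument of Lei et al.

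The proof is essentially bookkeeping; no step should pose a serious obstacle. The only mild subtlety is the conditioning argument: one must be careful that the posterior-based transformation $s\mapsto s'$ is treated as a fixed function after conditioning on $(\mathcal{D}_\train,\mathcal{D}_{\calib,1})$, so that the exchangeability of $\mathcal{D}_{\calib,2}\cup\{(\X_{n+1},Y_{n+1})\}$ is inherited by the transformed scores. Once that is made explicit, both bounds follow immediately from the standard split-conformal theorem.
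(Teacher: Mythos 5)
Your proposal is correct and follows essentially the same route as the paper, which simply observes that $s'$ is itself a valid conformal score (fixed once $\mathcal{D}_\train$ and $\mathcal{D}_{\calib,1}$ are conditioned on) and invokes the standard split-conformal guarantee of \citet[Theorem 2]{Lei2018} applied to $\mathcal{D}_{\calib,2}$. You have merely spelled out the conditioning and exchangeability bookkeeping that the paper leaves implicit.
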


We now analyze its conditional coverage properties.

As the sample size of the calibration set used to compute the posterior $D$ increases, the distribution function $F(s(\x,y)|\x,D)$ typically converges to $S(\X,Y)|\x,\theta^*$, where $\theta^*$ denotes the true parameter value \citep{schervish2012theory, bernardo2009bayesian}. Consequently, \ourmethod\ recovers the score proposed by \citet[Eq.~14]{dheur2025multi} in the limit of large calibration samples, which is exactly when epistemic uncertainty is negligible. This score is known to control asymptotic conditional coverage. We show that our proposed score exhibits the same property.

Formally, we assume that the 
predictive distribution converges to the true distribution of the conformal score \citep{bernardo2009bayesian,schervish2012theory}:
\begin{Assumption}\label{assumption:uniform_convergence}
For any $\varepsilon > 0$, we assume uniform convergence in probability over the randomness in $D$:
\begin{align*}
    \lim_{|D| \to \infty} \P\left( \sup_{s, \x} \left|F(s \mid \x, D) - F(s \mid \x, \theta^*)\right| > \varepsilon \right) = 0.
\end{align*}
\end{Assumption}

Next, we show  that \ourmethod \ has  asymptotic conditional coverage:

\begin{thm}
Under Assumption \ref{assumption:uniform_convergence}, and assuming that the data are exchangeable, the confidence region constructed by \ourmethod~ satisfies the asymptotic conditional coverage condition, that is:
\label{thm::conditional_coverage}
\[
\lim_{|\mathcal{D}_{\calib}|\to \infty}\P\left(Y \in {R}_{\ourmethodshort}(\X) \mid \X = \x \right) = 1 - \alpha.
\]
\end{thm}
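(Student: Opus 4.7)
The plan is to show that as the calibration sample grows, the EPICSCORE nonconformity score $s'(\x,y)=F(s(\x,y)\mid\x,D)$ converges to an ``oracle'' score $s^*(\x,y)=F(s(\x,y)\mid\x,\theta^*)$, which (by the probability integral transform) is conditionally uniform on $[0,1]$, and then to argue that the calibration cutoff $t_{1-\alpha}$ converges to $1-\alpha$. Combining these two limits yields exact conditional coverage.

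First, I would use Assumption~\ref{assumption:uniform_convergence} directly: it gives $\sup_{s,\x}|F(s\mid\x,D)-F(s\mid\x,\theta^*)|\to 0$ in $\P$-probability as $|D|\to\infty$, and hence $\sup_{\x,y}|s'(\x,y)-s^*(\x,y)|\to 0$ in probability. Next, interpreting $F(\cdot\mid\x,\theta^*)$ as the true conditional CDF of $s(\X,Y)\mid \X=\x$ (which is continuous under mild assumptions on the base score), the classical probability integral transform gives that $s^*(\X,Y)\mid \X=\x$ is $\mathrm{Unif}[0,1]$ for every $\x$, and therefore the unconditional distribution of $s^*(\X,Y)$ is also $\mathrm{Unif}[0,1]$.

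Second, I would handle the empirical cutoff. The threshold $t_{1-\alpha}$ is the empirical $(1-\alpha)$-quantile of $\{s'(\X_i,Y_i)\}_{(\X_i,Y_i)\in\mathcal{D}_{\calib,2}}$. Working on the high-probability event where $\sup_{\x,y}|s'-s^*|<\varepsilon$, each $s'(\X_i,Y_i)$ differs from $s^*(\X_i,Y_i)$ by at most $\varepsilon$, so the empirical CDF of the $s'$-values lies within $\varepsilon$ of the empirical CDF of the $s^*$-values. Since the $s^*(\X_i,Y_i)$ are i.i.d.\ $\mathrm{Unif}[0,1]$, the Glivenko--Cantelli theorem (or the DKW inequality) ensures that their empirical $(1-\alpha)$-quantile converges in probability to $1-\alpha$. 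A sandwich argument therefore yields $t_{1-\alpha}\to 1-\alpha$ in probability.

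Third, for fixed $\x$, I would write
\begin{align*}
\P\left(Y\in R_{\ourmethodshort}(\X)\mid \X=\x\right) = \P\left(s'(\x,Y)\le t_{1-\alpha}\mid \X=\x\right),
\end{align*}
and use that $s'(\x,Y)$ is within $\varepsilon$ of $s^*(\x,Y)$ uniformly, while $t_{1-\alpha}$ is within $\varepsilon$ of $1-\alpha$. Since the limiting conditional CDF $u\mapsto \P(s^*(\x,Y)\le u\mid \X=\x)=u$ is continuous at $1-\alpha$, letting $\varepsilon\downarrow 0$ and $|\mathcal{D}_\calib|\to\infty$ gives the limit $1-\alpha$, as required. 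The main obstacle is coupling the two independent sources of randomness, namely the posterior-based fluctuations of $F(\cdot\mid\x,D)$ coming from $\mathcal{D}_{\calib,1}$ and the empirical-quantile fluctuations coming from $\mathcal{D}_{\calib,2}$; the clean way is to condition on $D$, apply the uniform bound on a high-probability event, and then invoke the empirical quantile convergence uniformly over that event, so both limits can be taken simultaneously.
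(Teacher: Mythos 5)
Your proposal is correct and follows essentially the same route as the paper's proof: compare $s'$ to the oracle score $F(s(\x,y)\mid\x,\theta^*)$ via the uniform convergence assumption, show the empirical cutoffs of the two scores are within $\varepsilon$ on a high-probability event, and exploit the conditional uniformity of the oracle score (probability integral transform) to sandwich the conditional coverage around $1-\alpha$. The only cosmetic difference is that you establish convergence of the oracle empirical quantile to $1-\alpha$ directly via Glivenko--Cantelli/DKW, whereas the paper cites an external lemma of \citet{dheur2025multi} for that step.
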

Assumption~\ref{assumption:uniform_convergence} plays a key role in Theorem~\ref{thm::conditional_coverage}. While strong, it serves as a practical condition to ensure that the predictive distribution $F(s \mid \x, D)$ converges to the true distribution $F(s \mid \x, \theta^*)$ as the sample grows, so that the estimated quantiles approximate the ideal ones. Although potentially stronger than necessary, relaxing it would require a more technical analysis, which we leave for future work. Empirical results in Section~\ref{sec:experiments} show that \ourmethod\ achieves coverage close to the nominal level, suggesting robustness even when the assumption holds only approximately.

It is worth emphasizing that the type of guarantee provided by Theorem~\ref{thm::conditional_coverage} is, in fact, the strongest achievable in this setting. As shown by~\cite{lei2014distribution}, exact conditional coverage is unattainable in general conformal frameworks.  Recent methods aim to improve local coverage~\citep{bostrom2020mondrian, foygel2021limits, cabezas2025regression} and could be combined with \ourmethod\ to improve local calibration.

\section{Experiments}\label{sec:experiments}

In this section, we evaluate our framework against state-of-the-art baselines, applying \ourmethod\ to two initial conformal scores: (i) quantile-based and (ii) regression-based. 
Each version of \ourmethod\ is compared to appropriate baselines, which are detailed in the following subsections. 
In Appendix \ref{sec::dens_epicscore} we also show that \ourmethod\ can be applied to density-based scores.

We consider three versions of \ourmethod, each using a different model for the predictive distribution (see implementation details in Appendix \ref{sec::comp_details}):  
\begin{itemize}
    \item \textbf{Bayesian Additive Regression Trees} \citep{chipman2010bart}: The BART model represents the score as a sum of regression trees:
    \begin{align*}
        s(Y, \X)|\X, \boldsymbol{\theta} \sim \phi\left( \sum_{i = 1}^m G_i(\X, T_i, M_i), \sigma \right) \; ,
    \end{align*}
    where $\phi$ denotes a probability distribution, $\sigma$ its associated scale, $G_i$ a binary tree with structure $T_i$ and leaf  values $M_i$. 
    We set $\phi$ as a Normal distribution and $\sigma$ depending on $\X$ to incorporate heteroskedasticity \citep{pratola2020heteroscedastic}.
    
    \item \textbf{Gaussian Process (GP)} \citep{williams2006gaussian, schulz2018tutorial}: For the GP regression model, we assume the score follows the form
$
        s(Y,\x) = f(\x) + \varepsilon \;,
 $ with $\varepsilon \sim N(0, \sigma_{\varepsilon}^2)$ representing independent Gaussian noise, and $f(\x) \sim GP(m(\x), k(\x, \x'))$ is a Gaussian Process with mean function $m(\x)$ and covariance function $k(\x, \x')$.  
    We adopt variational approximations to the predictive distribution \citep{salimbeni2018natural}, which offer scalability.
    \item \textbf{Mixture Density Network with MC-Dropout \citep{bishop1994mixture, gal2016dropout}}: The Mixture Density Network (MDN) models the score distribution using a weighted sum of Gaussian components:
    \begin{align*}
        f(s(y,\x)|\x) = \sum_{k = 1}^K \pi_k(\x)N(s(y,\x) |\mu_k(\x), \sigma^2_k(\x)) \; ,
    \end{align*}
    where $N(\cdot)$ denotes the normal density and $\pi_k(\cdot)$, $\mu_k(\cdot)$, $\sigma_k(\cdot)$ are all modeled by neural networks, with $\sum_{k = 1}^K \pi_k(\x) = 1$. 
    To derive a predictive distribution for the scores, we incorporate dropout at each MDN layer. By performing multiple stochastic forward passes using MC Dropout, we approximate the posterior distribution of the MDN parameters, thus  propagating uncertainty into the predictive score distribution \citep{gal2016dropout}. Appendix \ref{sec::mc_dropout_details} explains how MC Dropout yields predictive distributions.
\end{itemize}

Our comparisons are conducted using 13  datasets commonly employed for benchmarking in the conformal prediction literature: Airfoil \citep{dua2017uci}, Bike \citep{kaggle_bike_sharing_demand}, Concrete \citep{concrete_compressive_strength_165, dua2017uci}, Cycle \citep{combined_cycle_power_plant_294, dua2017uci}, Homes\cite{kaggle2016}, Electric \citep{dua2017uci}, Meps19 \citep{romano2019}, Protein \citep{physicochemical_properties_of_protein_tertiary_structure_265, dua2017uci}, Star \citep{achilles2008tennessee}, SuperConductivity \citep{superconductivty_data_464}, WEC \citep{neshat2020optimisation}, WineRed \citep{wine_quality_186}, and WineWhite \citep{wine_quality_186}. Additional details on these datasets are provided in Table \ref{tab:realdata}  of the Appendix.

 We report the average performance across 50  runs, highlighting methods that achieve statistically significant improvements based on 95\% confidence interval of each evaluation metric. In each run, we randomly partition the data into  40\% for training, 40\% for calibration, and 20\% for testing.

We use the Average Interval Score Loss (AISL) \citep{gneiting2007strictly} as our main evaluation metric, as it balances coverage and interval length, promoting narrower yet valid prediction intervals. To    evaluate how each method captures epistemic uncertainty, we also report the outlier-to-inlier interval length ratio and average coverage on outliers, which measure adaptivity in data-scarce regions. Additionally, we include (i) average interval length, (ii) marginal coverage, and (iii) the Pearson correlation between coverage and interval length, which serves as a proxy for conditional coverage quality \citep{feldman2021improving}. A full description of all metrics can be found in Appendix~\ref{sec::evaluation_metrics}, with extended empirical results presented in Appendix~\ref{sec::additional_res}.

\begin{table*}[ht]
\caption{Quantile regression AISL values for each method and dataset. The table reports the mean across 50 runs, with twice the standard deviation in brackets. Bold values indicate the best-performing method within a $95\%$ confidence interval. \ourmethod{} demonstrates strong performance across most datasets and consistently ranks among the top methods.}
\label{tab:aisl_quantile}
\centering
\begin{adjustbox}{max width=\textwidth}
\begin{tabular}{lccccccc}
\hline
\textbf{Dataset}  & \textbf{EPIC-BART}      & \textbf{EPIC-GP}        & \textbf{EPIC-MDN}       & \textbf{CQR}            & \textbf{CQR-r}          & \textbf{UACQR-P} & \textbf{UACQR-S}        \\ \hline
airfoil           & 19.361 (0.234)          & 19.704 (0.27)           & \textbf{18.799 (0.29)}  & 20.521 (0.234)          & 20.535 (0.236)          & 23.021 (0.337)   & 20.188 (0.3)            \\
bike $\times (10^1)$             & 44.722 (0.297)          & 47.818 (0.320)          & \textbf{43.858 (0.326)} & 45.628 (0.256)          & 45.638 (0.258)          & 53.413 (0.376)   & \textbf{43.815 (0.385)} \\
concrete          & \textbf{42.765 (0.723)} & 45.276 (0.764)          & 44.442 (0.8)            & 46.882 (0.681)          & 46.896 (0.683)          & 52.789 (1.097)   & 47.324 (1.349)          \\
cycle             & 34.435 (0.142)          & 35.054 (0.131)          & \textbf{34.077 (0.129)} & 39.218 (0.134)          & 39.408 (0.136)          & 43.775 (0.181)   & 35.346 (0.197)          \\
electric          & 0.099 (< 0.001)           & 0.096 (< 0.001)           & \textbf{0.082 (< 0.001)}  & 0.102 (0.001)           & 0.102 (0.001)           & 0.111 (0.001)    & 0.097 (< 0.001)           \\
homes $\times (10^5)$            & 7.739 (0.066)           & 8.098 (0.072)           & \textbf{7.225 (0.049)}  & 8.360 (0.075)           & 8.433 (0.078)           & 11.427 (0.131)   & 8.544 (0.107)           \\
meps19            & \textbf{65.085 (1.469)} & \textbf{64.907 (1.56)}  & \textbf{64.3 (1.528)}   & \textbf{64.239 (1.56)}  & \textbf{64.239 (1.56)}  & 71.015 (1.763)   & \textbf{63.737 (1.461)} \\
protein           & 17.687 (0.019)          & 18.096 (0.037)          & \textbf{17.417 (0.019)} & 17.7 (0.015)            & 17.7 (0.016)            & 18.149 (0.015)   & 17.691 (0.015)          \\
star $\times (10^1)$            & \textbf{98.466 (0.768)} & \textbf{98.033 (0.750)} & \textbf{98.725 (0.754)} & \textbf{97.770 (0.725)} & \textbf{97.791 (0.724)} & 99.782 (0.647)   & 99.809 (0.968)          \\
superconductivity & 74.37 (0.222)           & 80.278 (0.266)          & \textbf{70.212 (0.196)} & 75.496 (0.219)          & 75.508 (0.218)          & 87.929 (0.513)   & 73.971 (0.404)          \\
WEC $\times (10^5)$          &  2.925 (0.009)   & 2.665 (0.012)   & \textbf{2.374 (0.010)}  & 3.138 (0.009)         & 3.142 (0.009)           & 3.517 (0.010)   & 3.046 (0.010)           \\
winered           & \textbf{3.007 (0.058)}  & \textbf{3.009 (0.059)}  & \textbf{2.977 (0.05)}   & \textbf{2.979 (0.069)}  & \textbf{2.978 (0.069)}  & 3.059 (0.069)    & \textbf{2.999 (0.063)}  \\
winewhite         & 3.334 (0.03)            & 3.327 (0.034)           & \textbf{3.219 (0.03)}   & 3.316 (0.036)           & 3.315 (0.036)           & 3.378 (0.038)    & \textbf{3.2 (0.036)}    \\ \hline
\end{tabular}

\end{adjustbox}
\end{table*}

\begin{table*}[ht]
\caption{Regression AISL values for each method and dataset. The reported values represent the average across 50 runs, with two times the standard deviation in parentheses. Bolded values highlight the method with superior performance within a $95\%$ confidence interval. \ourmethod{} demonstrates competitive or superior performance compared to other methods.}
\label{tab:aisl_reg}
\centering
\begin{adjustbox}{max width=\textwidth}
\begin{tabular}{lcccccc}
\hline
\textbf{Dataset}  & \textbf{EPIC-BART}                & \textbf{EPIC-GP}          & \textbf{EPIC-MDN}                 & \textbf{Mondrian}       & \textbf{Reg-split}         & \textbf{Weighted}                 \\ \hline
airfoil           & \textbf{19.747 (0.767)}           & \textbf{20.287 (0.686)}   & \textbf{19.823 (0.675)}           & 21.532 (0.919)          & 21.201 (0.98)              & \textbf{20.276 (0.819)}           \\
bike $\times (10^1)$           & \textbf{36.381 (0.463)}           & 41.448 (0.575)            & \textbf{37.041 (0.452)}           & 38.190 (0.403)          & 43.918 (0.567)             & 37.773 (0.468)                    \\
concrete          & \textbf{52.098 (2.237)}           & \textbf{52.998 (2.359)}   & \textbf{51.648 (2.185)}           & 61.915 (2.815)          & \textbf{54.902 (2.634)}    & 58.399 (3.165)                    \\
cycle             & \textbf{19.418 (0.211)}           & \textbf{19.522 (0.221)}   & \textbf{19.436 (0.213)}           & \textbf{19.403 (0.226)} & \textbf{19.73 (0.208)}     & \textbf{19.49 (0.207)}            \\
electric          & \textbf{0.048 (\textless{}0.001)} & 0.049 (\textless{}0.001)  & \textbf{0.048 (\textless{}0.001)} & 0.05 (\textless{}0.001) & 0.05 (0.001)               & \textbf{0.048 (\textless{}0.001)} \\
homes $\times (10^5)$             & 5.921 (0.0716)                    & 6.192 (0.0689)            & \textbf{5.546 (0.055)}           & 5.710 (0.053)           & 7.569 (0.098)              & 5.860 (0.056)                     \\
meps19            & 86.039 (2.421)                    & 87.086 (2.405)            & \textbf{75.061 (1.807)}           & 79.192 (1.821)          & 109.83 (2.695)             & 92.433 (3.259)                    \\
protein           & 18.885 (0.054)                    & 18.772 (0.065)            & 17.735 (0.055)                    & \textbf{17.586 (0.051)} & 19.423 (0.055)             & 18.314 (0.065)                    \\
star $\times (10^1)$ & \textbf{105.616 (1.255)}        & \textbf{106.112 (0.998)} & \textbf{106.368 (1.173)}        & 109.346 (1.119)       & \textbf{105.250 (1.038)} & 129.492 (1.657)   \\
superconductivity & 54.895 (0.364)                    & 59.16 (0.449)             & \textbf{53.406 (0.365)}           & 58.065 (0.313)          & 68.183 (0.418)             & 54.981 (0.345)                    \\
WEC $\times (10^5)$ & 1.437 (0.010) & 1.435 (0.011) & \textbf{1.283 (0.009)} & 
\textbf{1.294 (0.009)} & 1.620 (0.009) & 1.410 (0.009) \\
winered           & \textbf{3.152 (0.07)}             & \textbf{3.171 (0.064)}    & \textbf{3.101 (0.062)}            & 3.262 (0.069)           & \textbf{3.214 (0.063)}     & 3.415 (0.067)                     \\
winewhite         & \textbf{3.104 (0.027)}            & 3.187 (0.029)             & \textbf{3.129 (0.029)}            & \textbf{3.087 (0.023)}  & 3.181 (0.028)              & 3.189 (0.033)                     \\ \hline
\end{tabular}
\end{adjustbox}
\end{table*}

\subsection{Quantile-Regression Baselines}
 
 For quantile regression-based scores, we adopt CatBoost \citep{dorogush2018catboost} as the base quantile-regression model in all conformal methods. See Appendix \ref{sec::base_model_details} for details on hyperparameters. 
  We compare \ourmethod\ to the following baselines:
\begin{itemize}
\item \textbf{CQR}  \citep{romano2019}, the conformal quantile regression method described in the introduction.
    \item \textbf{CQR-r}  \citep{sesia2020comparison}, which scales each derived cutoff by the interval width to produce adaptive intervals. As CQR, this approach accounts only for aleatoric uncertainty.
    \item \textbf{UACQR-P} and \textbf{UACQR-S} \citep{rossellini2024integrating}, that  integrate epistemic uncertainty  through ensemble-based statistics.We use their default strategies: UACQR-S with ensemble standard deviation, and UACQR-P with ensemble order statistics.
\end{itemize}
All baselines are fitted and evaluated using the implementation from \cite{rossellini2024integrating}.

\begin{table*}[ht]
\caption{Average outlier-to-inlier interval length ratio across methods and datasets in the quantile regression setting. This table reports the average ratio between prediction interval lengths for outliers and inliers across 50 runs. Bold entries denote the best-performing method within a 95\% confidence interval. Overall, \ourmethod{} consistently achieves higher ratios, demonstrating its effectiveness in adjusting interval widths based on data sparsity.}
\label{tab:interval_ratio_quantile} 
\centering
\begin{adjustbox}{max width=\textwidth}
\begin{tabular}{cccccccc}
\hline
\textbf{Dataset}  & \textbf{EPIC-BART}     & \textbf{EPIC-GP}       & \textbf{EPIC-MDN}      & \textbf{CQR}           & \textbf{CQR-r}         & \textbf{UACQR-P}       & \textbf{UACQR-S}       \\ \hline
airfoil           & \textbf{1.028 (0.034)} & \textbf{1.01 (0.018)}  & \textbf{1.048 (0.04)}  & \textbf{1.005 (0.013)} & \textbf{1.004 (0.012)} & \textbf{1.002 (0.01)}  & \textbf{1.005 (0.016)} \\
bike              & 0.991 (0.007)          & \textbf{1.01 (0.007)}  & 0.989 (0.007)          & 0.985 (0.007)          & 0.985 (0.007)          & 0.99 (0.005)           & 0.981 (0.01)           \\
concrete          & 0.983 (0.033)          & 0.995 (0.016)          & \textbf{1.043 (0.029)} & 0.998 (0.012)          & 0.998 (0.012)          & 0.998 (0.008)          & 0.998 (0.012)          \\
cycle             & 0.953 (0.014)          & 0.966 (0.014)          & 0.949 (0.014)          & \textbf{0.997 (0.005)} & \textbf{0.997 (0.005)} & \textbf{1.0 (0.004)}   & \textbf{0.998 (0.007)} \\
electric          & \textbf{1.01 (0.004)}  & \textbf{1.01 (0.004)}  & \textbf{1.012 (0.008)} & \textbf{1.008 (0.004)} & \textbf{1.007 (0.003)} & \textbf{1.006 (0.003)} & \textbf{1.01 (0.005)}  \\
homes             & 1.116 (0.019)          & 1.101 (0.016)          & \textbf{1.173 (0.027)} & 1.088 (0.015)          & 1.081 (0.014)          & 1.039 (0.006)          & 1.094 (0.014)          \\
meps19            & \textbf{1.052 (0.029)} & \textbf{1.042 (0.025)} & \textbf{1.044 (0.028)} & \textbf{1.04 (0.025)}  & \textbf{1.04 (0.025)}  & \textbf{1.021 (0.015)} & \textbf{1.045 (0.026)} \\
protein           & 1.001 (0.002)          & 1.001 (0.003)          & \textbf{1.006 (0.002)} & 1.002 (0.001)          & 1.002 (0.001)          & 1.001 (0.001)          & 1.002 (0.001)          \\
star              & \textbf{0.994 (0.006)} & \textbf{0.994 (0.003)} & \textbf{0.999 (0.007)} & \textbf{0.994 (0.003)} & \textbf{0.993 (0.003)} & \textbf{0.997 (0.003)} & \textbf{0.997 (0.003)} \\
superconductivity & 1.021 (0.007)          & 0.987 (0.006)          & 1.029 (0.008)          & \textbf{1.041 (0.007)} & \textbf{1.041 (0.007)} & 1.023 (0.004)          & \textbf{1.049 (0.008)} \\
WEC               & \textbf{1.012 (0.006)} & 1.003 (0.008)          & \textbf{1.02 (0.01)}   & \textbf{1.016 (0.003)} & \textbf{1.016 (0.003)} & 1.011 (0.002)          & \textbf{1.021 (0.004)} \\
winered           & 1.008 (0.012)          & 1.007 (0.008)          & \textbf{1.072 (0.022)} & 0.993 (0.008)          & 0.992 (0.008)          & 0.996 (0.005)          & 0.992 (0.009)          \\
winewhite         & 1.031 (0.007)          & 1.024 (0.006)          & \textbf{1.053 (0.014)} & 1.018 (0.007)          & 1.018 (0.007)          & 1.018 (0.006)          & \textbf{1.034 (0.01)}  \\ \hline
\end{tabular}
\end{adjustbox}
\end{table*}

\subsection{Regression baselines}

 For regression-based conformal scores, we use a neural network optimized with a penalized Mean Squared Loss.  Detailed descriptions of the architectures and hyperparameters used can be found in Appendix  \ref{sec::base_model_details}.
We compare \ourmethod\ to the following baselines: \begin{itemize} 
\item \textbf{Regression Split} \citep{lei2014distribution}, the conformal method based on residuals from a regression model described in the introduction. 
\item \textbf{Weighted Regression Split} \citep{Lei2018}, which multiplies the derived cutoff by a conditional Mean Absolute Deviance (MAD) estimate to yield adaptive intervals. The MAD is modeled by regressing the training set's absolute residuals on $\X$, using the same model architecture as the base predictor.
\item \textbf{Mondrian Conformal Regression} \citep{bostrom2020mondrian}, which enhances conditional coverage by adaptively partitioning the feature space using a binning scheme based on conditional variance. We estimate variance by fitting a Random Forest to $(\X, Y)$.
\end{itemize}

\subsection{Results}

The mean average coverage is close to the nominal 90\% for all methods (Table \ref{tab:amc} and \ref{tab:amc_reg} of Appendix \ref{sec:additional}),  which is expected since all methods are conformal.

In the quantile regression setting, Table~\ref{tab:aisl_quantile} highlights \ourmethod{}'s strong performance across all datasets, with the MDN-MC Dropout variant ranking among the top in 12 out of 13 cases. Notably, \ourmethod{} outperforms all competing methods on 7 datasets, effectively balancing coverage and interval sharpness. Table~\ref{tab:interval_ratio_quantile} demonstrates \ourmethod{}'s superior adaptability to data-sparse regions, with the MDN-MC Dropout variant achieving the highest outlier-to-inlier interval length ratio in 10 out of 13 cases. Table~\ref{tab:coverage_outlier} further confirms good coverage for outliers in those same sparse regions. Additionally, Tables~\ref{tab:il} and~\ref{tab:pcorr} (Appendix~\ref{sec:additional}) show that this variant  yields narrower intervals while maintaining strong conditional coverage.

For the regression setting, Table~\ref{tab:aisl_reg} confirms \ourmethod{}'s strong performance across most datasets, with the MDN-MC Dropout variant excelling in 12 out of 13 cases and the BART version ranking among the top in 8. Tables~\ref{tab:interval_ratio_reg} and~\ref{tab:coverage_outlier_reg} report the outlier-to-inlier interval length ratio and outlier coverage, respectively—both showing that the MDN-MC Dropout variant performs consistently well in data-sparse regions. Additionally, Tables~\ref{tab:il_reg} and~\ref{tab:pcorr_reg} (Appendix~\ref{sec:additional}) confirm its ability to generate concise intervals while preserving approximate conditional coverage. 

\section{Final Remarks}\label{sec:final}

We introduce \ourmethod, a novel conformal score that incorporates epistemic uncertainty into predictive regions. Using Bayesian modeling, \ourmethod\ dynamically adjusts any nonconformity score to account for epistemic uncertainty, ensuring coverage even in sparse regions. We prove it preserves marginal coverage and achieves asymptotic conditional coverage.
Empirical results show \ourmethod\ often outperforms alternatives, producing prediction regions that better reflect uncertainty while maintaining valid coverage.

Unlike previous approaches that rely on specific modeling choices or task-dependent formulations, \ourmethod\ is fully model-agnostic. Any Bayesian model can estimate the epistemic uncertainty of a given conformal score, allowing practitioners to tailor the method to their application. This flexibility extends \ourmethod's applicability across regression, classification, and structured prediction problems.

Looking ahead, we plan to extend \ourmethod{} to settings involving distribution shift, where accounting for epistemic uncertainty is  important due to increased data sparsity in certain regions. 
In addition, we intend to improve the efficiency of our calibration procedure by eliminating the need to split the calibration set. To this end, we will explore adaptations based on Jackknife+ and CV+ methods \citep{barber2021predictive}, as well as multiple-split conformal prediction techniques \citep{Lei2018}, which offer ways to aggregate information with statistical guarantees. 
Code to implement \ourmethod \ and reproduce the experiments is available at \url{https://github.com/Monoxido45/EPICSCORE}.

\begin{acknowledgements} 

This study was financed in part by the Coordenação de Aperfeiçoamento de Pessoal de Nível Superior - Brasil (CAPES) - Finance Code 001.
    L.M.C.C is grateful for the fellowship provided by São Paulo Research Foundation (FAPESP), grant 2022/08579-7. V. S. S. is grateful for the financial support of FAPESP (grant 2023/05587-1). R. I. is grateful for the financial support of FAPESP (grants 2019/11321-9 and 2023/07068-1) and 
CNPq (grants 422705/2021-7 and 305065/2023-8). The authors are also grateful to Rodrigo F. L. Lassance for his suggestions and insightful discussions. 

\end{acknowledgements}

\bibliography{uai2025-template}

\newpage

\onecolumn

\title{Epistemic Uncertainty in Conformal Scores: A Unified Approach\\(Appendix)}
\maketitle

\appendix

\section{Technical Details and Supplementary Results for the Introduction's Examples}  
\label{sec::technical}

\subsection{Regression}
\label{sec::technicalReg}

In this section, we detail the example presented by Figure \ref{fig::reg_split}. We simulate a scenario with two distinct dense regions exhibiting low aleatoric and epistemic uncertainty, separated by an intermediate, sparser region with high aleatoric and epistemic uncertainty. Given a sample size $n$, we first generate $\left \lfloor{0.425 \cdot n}\right \rfloor$ samples for each $X \sim U(0, 1.5)$ and $X \sim U(8, 10)$, ensuring that $85\%$ of the data comes from the two outer regions, reflecting low epistemic uncertainty. The corresponding response variable follows $Y \sim N(2\sin{X}, 0.1)$, which also reflects low aleatoric uncertainty. For the remaining $\left \lfloor{0.15 \cdot n}\right \rfloor$ samples, we draw $X$ from a transformed Beta distribution, $X \sim (\text{Beta}(8, 8) \cdot (8 - 1.5) ) + 1.5$, concentrating points in the intermediate region. Here, the response variable follows $Y \sim N(2 \sin{X}, 2.1)$, introducing high aleatoric uncertainty. Epistemic uncertainty is particularly elevated at the boundaries of this region.

In this setting, we use a K-nearest neighbors (KNN) algorithm with $k = 10$ as the regression base model $g(\x)$. We illustrate the difference between \ourmethod{} and established conformal prediction baselines for regression intervals, including Regression Split \citep{lei2014distribution}, Weighted Regression Split \citep{Lei2018}, and the Mondrian Conformal Regression \citep{bostrom2020mondrian}. For \ourmethod{}, we use the BART-based version, with $m = 100$ trees, default prior options for all parameters, and a heteroscedastic gamma distribution as the probability model for the conformal score (detailed in \ref{sec::bart_details}), which is appropriate given that the regression conformal score is non-negative and often asymmetric.

In terms of baselines, both the Weighted and Mondrian methods estimate the conditional spread to construct prediction intervals. The locally weighted approach models the Mean Absolute Deviance (MAD), $\E[|g(\X) - Y||\X]$, by regressing absolute residuals $|g(\x) - y|$ on $\X$ using the same model type as $g(\x)$. Meanwhile, the Mondrian method partitions the feature space using a binning scheme (or taxonomy) based on an estimation of conditional variance $\V[Y|\X]$, generally obtained using ensemble-based variance, commonly derived from an additionally fitted Random Forest. 

Visually, both baseline methods outperform regression split, providing adaptive prediction intervals that widen in regions with high aleatoric uncertainty and narrow in regions with low aleatoric uncertainty. However, both methods struggle to generate wider intervals in data-sparse regions, such as $x \in (1.5, 2)$ and $x \in (7,8)$. This limitation arises because these regions have low spread estimates due to insufficient data, leading to underestimated cutoffs. In contrast, \ourmethod{} offers widened predictive intervals in these regions, better capturing epistemic uncertainty, while still accurately representing uncertainty in data-rich areas.

\subsection{Classification}
\label{sec::technicalClass}
For the image classification example (introduced in Figure \ref{fig::images}), we used the publicly available CIFAR-100 dataset \citep{krizhevsky2009learning}, which consists of $60,000$ color images of size $32\times32$ spanning $100$ classes, with each class containing $600$ images. The dataset was split into training, calibration, and test sets, allocating $10\%$ for testing and $45\%$ each for training and calibration. We utilized a ResNet-34 model \citep{he2016deep} to extract 512-dimensional feature representations and trained a Random Forest classifier on the training set with default parameters except for the number of trees, which we fixed at $300$. The classifier reports an accuracy of $47\%$ in the test set. The Adaptive Prediction Set and \ourmethod{} were then applied to the calibration set, using the MDN MC-dropout variant of \ourmethod{}, which model's architecture is better detailed in \ref{sec::mc-dropout-architecture}. 

In the classification setting, \ourmethod{} includes an alternative adaptation to handle the discrete nature of the conformal score (see Section \ref{sec::special_cases} for details). However, it can also be applied similarly to the regression and quantile regression settings by treating the conformal score as continuous, fitting a predictive distribution, and deriving adaptive cutoffs. This approach involves normalizing the score to better leverage the chosen models. Given the large number of classes in CIFAR-100, this approximation remains valid, as it produces more fine-grained probability vectors. In this example, we adopt this formulation of \ourmethod{}, using the APS score as the conformal score and deriving adaptive thresholds from its fitted predictive distribution. 

To provide both a broad and detailed performance comparison, we first assess each method’s coverage and set size using the size-stratified coverage (SSC) metric \citep{angelopoulos2021gentle}. Next, we examine and visualize the prediction set sizes for the top 150 outliers and inliers, evaluating how well each method captures epistemic uncertainty in outliers while remaining adaptive for inliers. For illustration purposes, we set $\alpha = 0.2$ across all analyses.

\subsubsection*{SSC metric}
The SSC metric aims to evaluate the calibration of prediction sets by stratifying them into $G$ bins $\{B_j\}_{j = 1}^G$ based on their cardinality. For $j < G$, the bin $B_j$ contains the prediction sets with cardinality $j$,  while $B_G$ includes all sets with at least $G$ elements. Formally, let  $I_j = \{i: R(\X_i) \in B_j\}$ denote the indices of prediction sets that fall into bin $B_j$. The SSC metric for a given prediction set method $R(\cdot)$ is then defined as:
\begin{align}
    SSC(R) = \min_{j \in \{1, \dots, G\}} \frac{1}{|I_j|} \sum_{i \in I_j} \I \left\{Y_i \in R(\X_i) \right\} \; .
\end{align}
Intuitively, this metric measures the minimum coverage of $R(\cdot)$ across different set sizes, assessing whether coverage remains stable despite changes in set cardinality. SSC values close to $1 - \alpha$ indicate strong coverage performance, while values farther from $1 - \alpha$ suggest greater violations of conditional coverage \citep{angelopoulos2021gentle}. For this analysis, we set $G = 15$. Table \ref{tab:ssc_methods} reports the SSC average values and 2 times the standard error across 10 runs for both methods.



\begin{table}[!h]
\centering
\caption{Average SSC metric over 10 runs, with twice the standard deviation in brackets. \ourmethod{} achieves SSC values closer to the nominal level compared to APS.}
\label{tab:ssc_methods}
\begin{tabular}{lrr}
\toprule
\textbf{Method} & \textbf{SSC} & \textbf{2 * SE} \\
\midrule
\ourmethod-MDN & 0.734 & 0.014 \\
APS & 0.553  & 0.036 \\
\bottomrule
\end{tabular}
\end{table}


These results show that \ourmethod-MDN achieves an SSC much closer to the nominal level of $0.8$ than APS, indicating more consistent coverage across different set sizes. In contrast, APS has a lower average SSC, reflecting greater deviations from target coverage and potentially less reliable uncertainty quantification.

\subsubsection*{Outlier and inlier analysis}
To differentiate inliers from outliers for further analysis and comparison of each method's prediction sets, we first apply t-SNE \citep{van2008visualizing} to reduce the dimensionality of the feature space in the test set. We then use the Local Outlier Factor (LOF) method \citep{breunig2000lof} for outlier detection, leveraging a KNN-based density estimation to identify anomalies. The LOF score not only highlights outliers but also helps characterize typical (inlier) observations, offering a structured approach to assessing epistemic uncertainty across different regions of the data distribution. We fit LOF on the first two t-SNE dimensions, assuming a contamination rate of $10\%$ (i.e., treating 
$10\%$ of the sample as outliers), and rank the top 150 outliers and inliers for analysis.
 
In general, we expect outliers to have wider prediction sets, as they are located in sparser regions of the feature space. In contrast, inliers are likely to have narrower sets, reflecting their position in denser, more typical regions of the data distribution. Beyond Figure \ref{fig::images}, Figures \ref{fig::outlier_images_prediction_sets} and \ref{fig::inlier_images_prediction_sets} provide additional examples that further illustrate this behavior, emphasizing how \ourmethod{} differentiates itself from APS. Additionally, Figure \ref{fig::outliers_inliers_set_sizes_boxplot} displays the distribution of set sizes for outliers and inliers across both methods. While both methods generate larger prediction sets for outliers than for inliers, \ourmethod{} shows higher set sizes and a more dispersed distribution for outliers compared to APS, while presenting a more concentrated distribution for inliers.

Furthermore, we observe that the APS set sizes do not exceed a cardinality of 20, highlighting its lack of adaptability. Overall, these results reinforce the flexibility of our method across regions with varying levels of epistemic uncertainty, showcasing its advantage over the standard APS by explicitly incorporating epistemic uncertainty into the cutoff derivation.

\begin{figure}[ht]
    \centering
    \includegraphics[width=0.8\linewidth]{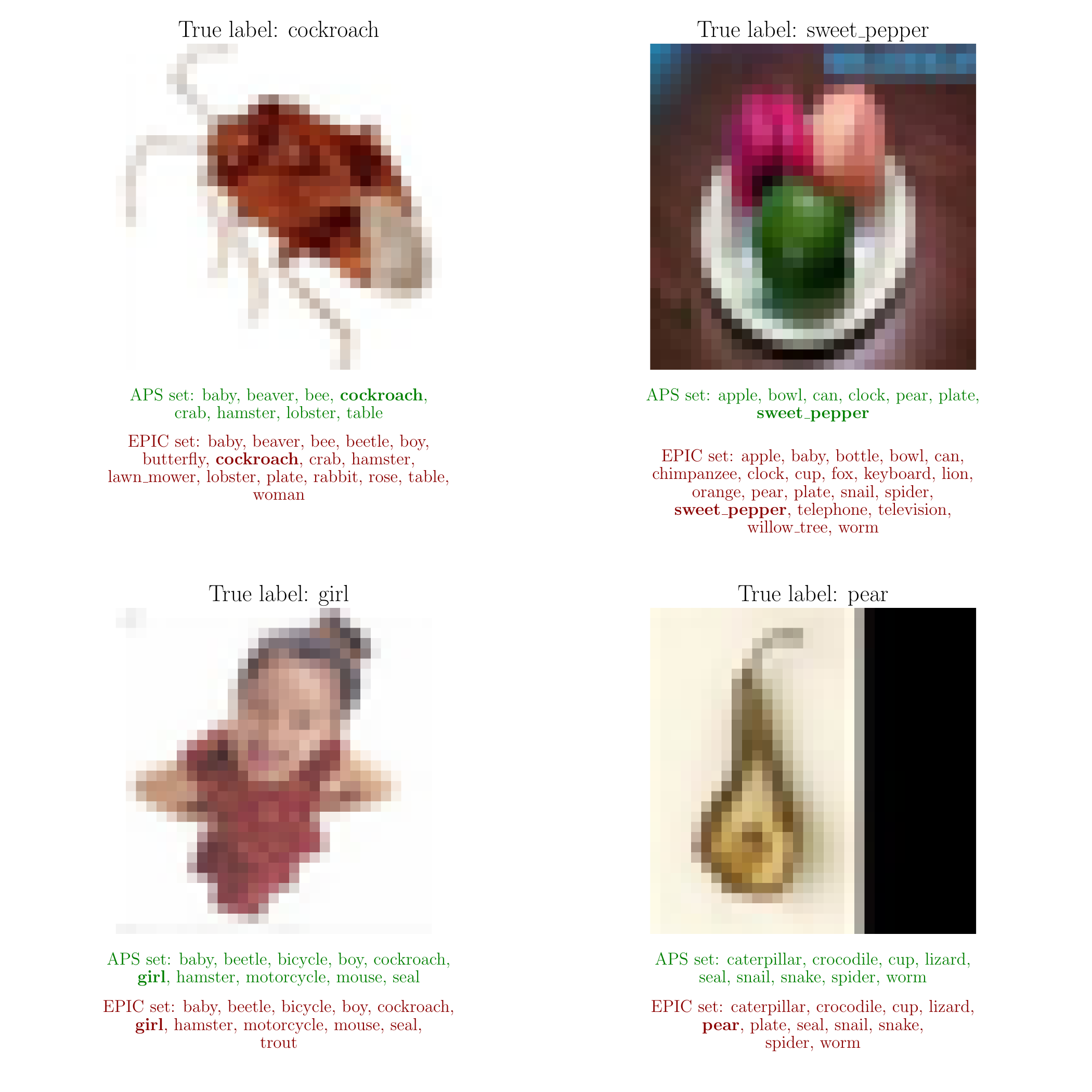}
    \caption{Additional outlier image prediction sets examples. \ourmethod{} consistently produces broader prediction sets for all selected outlier images, effectively capturing the high epistemic uncertainty associated with these observations.}
    \label{fig::outlier_images_prediction_sets}
\end{figure}

\begin{figure}[ht]
    \centering
    \includegraphics[width=0.8\linewidth]{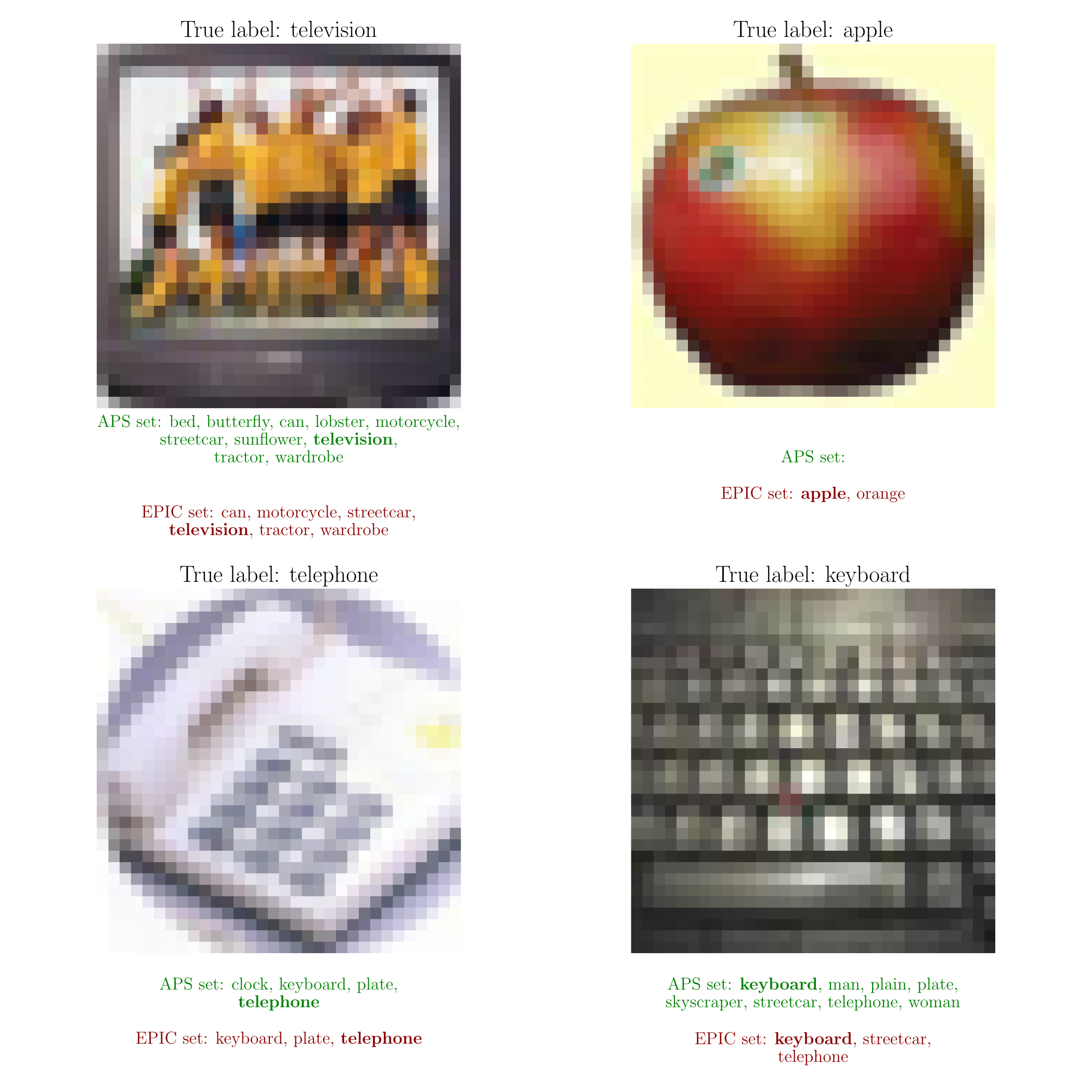}
    \caption{Additional inlier prediction sets examples. \ourmethod{} generates more compact prediction sets for all selected inliers while also preventing empty sets in one instance. This highlights its robustness and reliability in regions with low epistemic uncertainty.}
    \label{fig::inlier_images_prediction_sets}
\end{figure}

\begin{figure}[ht]
    \centering
    \includegraphics[width=0.675\linewidth]{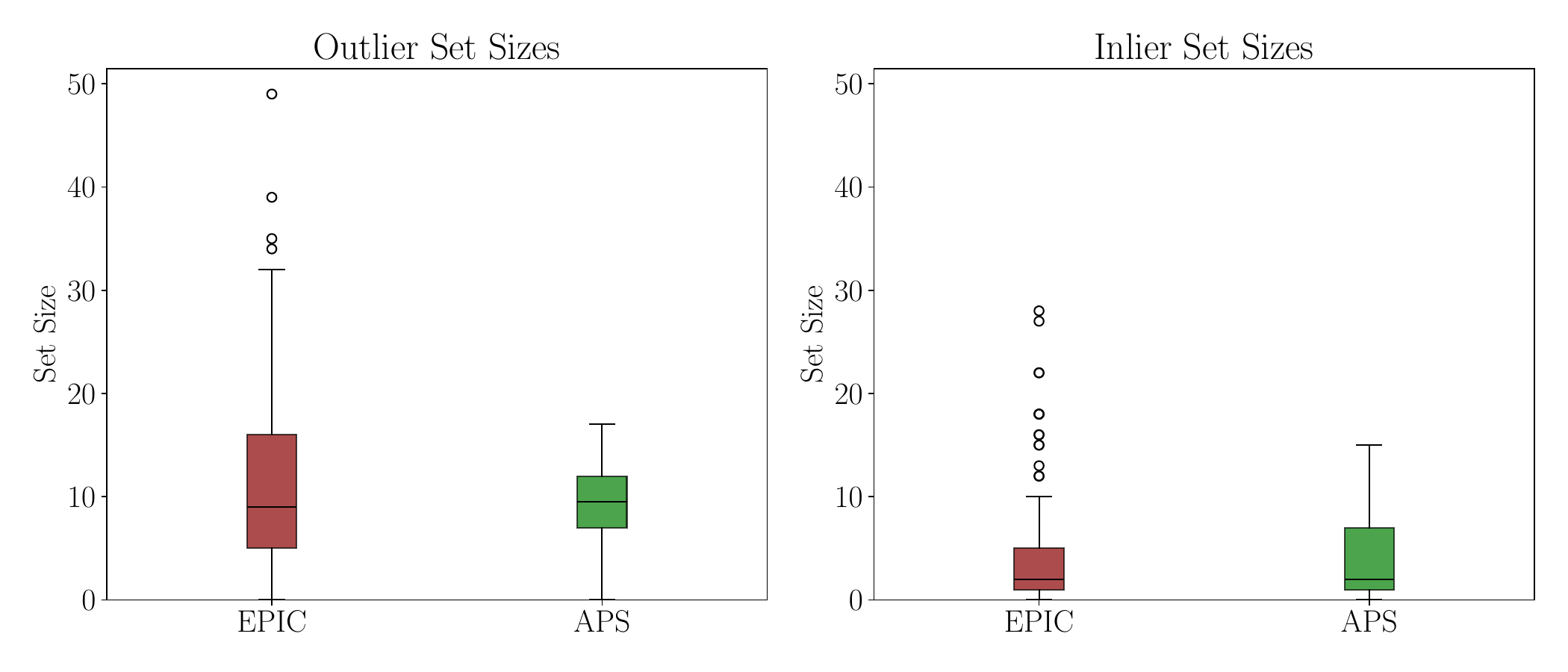}
    \caption{Left: Prediction set sizes for the top 150 most outlying observations. Right: Prediction set sizes for the top 150 most typical (inlier) observations. Both methods consistently produce larger prediction sets for outliers compared to inliers, but \ourmethod{} shows a more dispersed boxplot with higher set sizes for outliers, and a more concentrated boxplot for inliers compared to APS.}
    \label{fig::outliers_inliers_set_sizes_boxplot}
\end{figure}

\newpage

\section{Predictive Distributions via Monte Carlo Dropout and Batch Normalization}
\label{sec::mc_dropout_details}

Monte Carlo (MC) dropout was originally introduced as a regularization technique to prevent neural network overfitting \citep{srivastava2014dropout}. However, it can also be used to approximate the predictive distribution by interpreting dropout as a variational Bayesian method. Indeed, \cite{gal2016dropout} demonstrated that applying dropout at both training and prediction corresponds to a variational approximation to the posterior over the network weights. Specifically, the dropout masks define a variational family where each mask corresponds to a Bernoulli-distributed perturbation of the network.
From a practical standpoint, the predictive distribution is obtained by performing multiple stochastic forward passes with dropout enabled at prediction time \citep{Izbicki2025}. Each pass applies a different dropout mask, resulting in varied outputs—in our case, different sets of means, variances, and weights of a Gaussian mixture for the same data point $\x$. A sample of $Y$ is then drawn from each configuration, producing a Monte Carlo sample from the predictive distribution $Y|\x,D$.

Batch normalization (BN)  \citep{ioffebatch2015} also introduces stochasticity, but through its reliance on mini-batch statistics during training. As shown by \cite{teye2018bayesian}, this stochasticity induces an implicit distribution over the network's parameters, which can be interpreted as a form of approximate Bayesian prediction. During prediction, the batch statistics are fixed, and the model's outputs can be treated as samples from an approximate posterior—similarly to the MC dropout case.


\section{Additional Results and Details for Real Data Experiments}
\label{sec::additional_results}
In this section, we provide an overview of the evaluation metrics, dataset summaries, and additional results for both quantile and standard regression experiments. Additionally, we outline the architecture and hyperparameter configurations for each base model.

\subsection{Evaluation Metrics}
\label{sec::evaluation_metrics}

Let $\hat{R}(\cdot)$ denote a generic prediction interval. Given a test set $(\mathbf{X}_1, Y_1), (\mathbf{X}_2, Y_2), \ldots, (\mathbf{X}_m, Y_m)$, we evaluate performance using the following metrics:

\begin{itemize}
    \item \textbf{Average Marginal Coverage (AMC)}:
    \begin{equation*}
        \text{AMC} = \frac{1}{m} \sum_{i=1}^{m} \I\left( Y_i \in \hat{R}(\X_i) \right),
    \end{equation*}
    which is an estimate of the marginal coverage of $R$.
    
    \item \textbf{Average Interval Score Loss (AISL)} \citep{gneiting2007strictly}: 
    \begin{align*}
        \text{AISL} = \frac{1}{m} \sum_{i=1}^{m} \Bigg[ &\left( \max \hat{R}(\X_i) - \min \hat{R}(\X_i)\right) \\
        &+ \frac{2}{\alpha} \cdot \left(\min \hat{R}(\X_i) - Y_i \right) \cdot \I \left\{Y_i < \min \hat{R}(\X_i) \right\} \\
        &+ \frac{2}{\alpha} \cdot \left(Y_i - \max \hat{R}(\X_i) \right)\cdot \I \left\{Y_i > \max \hat{R}(\X_i \right\} \Bigg] \; ,
    \end{align*}
    where $\min \hat{R}(\X)$ and  $\max \hat{R}(\X)$  represent the lower and upper bounds of the prediction interval, respectively, and $\alpha$ is the miscalibration level. The Interval Score Loss balances two key objectives: maintaining narrow prediction intervals while penalizing those that fail to cover $Y_i$,  with larger penalties for greater deviations. By averaging these scores across all instances, AISL provides a measure that prioritizes the shortest interval while ensuring sufficient coverage. We chose the AISL following prior work in the literature, where it is commonly used as a summary metric \citep{rossellini2024integrating}, a paper that also addresses epistemic uncertainty in conformal predictions.

    \item \textbf{Interval Length (IL)}:
    \begin{equation*}
        \text{IL} =  \frac{1}{m} \sum_{i = 1}^{m} \max \hat{R}(\X_i) - \min \hat{R}(\X_i) \; ,
    \end{equation*}
    which measures the average interval length, reflecting the precision of the predictive intervals. Larger values correspond to wider, less informative intervals, while smaller values indicate more compact and precise intervals.
\item \textbf{Pearson Correlation between Coverage and Interval Length ($\rho$)} \citep{feldman2021improving}: This metric measures the correlation between the width of the prediction interval and its coverage, providing insight into potential conditional coverage violations. Specifically,

\begin{equation*} \rho = \left| \frac{\text{Cov}(\mathbf{C}, \mathbf{W})}{\sigma_{\mathbf{C}} \sigma_{\mathbf{W}}} \right| , \end{equation*}

where $\mathbf{C} = (C_1, \ldots, C_m)$ represents a binary vector, with $C_i = \I(Y_i \in \hat{R}(\mathbf{X}_i))$ indicating whether the prediction interval $\hat{R}(\mathbf{X}_i)$ covers $Y_i$, and $\mathbf{W} = (W_1, \ldots, W_m)$, where $W_i = \max \hat{R}(\mathbf{X}_i) - \min \hat{R}(\mathbf{X}_i)$. 
A strong correlation between coverage and interval width suggests a potential violation of conditional coverage, which requires their independence \citep{feldman2021improving}. However, $\rho = 0$ does not guarantee conditional coverage, as non-adaptive methods like regression split can achieve zero correlation by maintaining constant-width intervals \citep{rossellini2024integrating}. Thus, while this metric provides a useful proxy for assessing conditional coverage, it is not a definitive measure.

\item \textbf{Outlier-to-inlier interval length ratio (ILR)}:
\begin{align*}
    \text{ILR} = \frac{\frac{1}{|I_{\text{out}}|} \sum_{i \in I_{\text{out}}} \max \hat{R}(\X_i) - \min \hat{R}(\X_i)}{\frac{1}{|I_{\text{in}}|} \sum_{i \in I_{\text{in}}} \max \hat{R}(\X_i) - \min \hat{R}(\X_i)} \; ,
\end{align*}
where $I_{\text{out}}$ and $I_{\text{in}}$ denote the sets of outlier and inlier indices in the test set, respectively.Following the procedure described in Section~\ref{sec::technicalClass}, we distinguish inliers from outliers by first reducing dimensionality with t-SNE \citep{van2008visualizing}, and then applying the Local Outlier Factor (LOF) method \citep{breunig2000lof} using a contamination rate of $5\%$ (i.e., treating $5\%$ of the test instances as outliers). For this metric, we include all detected outliers in $I_{\text{out}}$ and define inliers $I_{\text{in}}$ as the 15\% of instances with the lowest LOF scores. This metric captures how adaptively a method responds to epistemic uncertainty: higher ILR values indicate that outliers—typically found in data-sparse regions—are assigned wider prediction intervals relative to inliers, as expected from a well-calibrated uncertainty-aware method.

\item \textbf{Average coverage on outliers (ACO)}:
\begin{align*}
    \text{ACO} = \frac{1}{|I_{\text{out}}|} \sum_{i \in I_{\text{out}}} \I(Y_i \in \hat{R}(\X_i)) \; ,
\end{align*}
where $I_{\text{out}}$ is the set of outlier indices in the test set, obtained using the same outlier detection procedure described above. This metric measures the proportion of outlier instances whose true response $Y_i$ falls within the predtion interval $\hat{R}(\cdot)$. Higher ACO values—ideally close to the nominal level $1 -\alpha$ indicate that the method maintains reliable coverage even in data-sparse regions. As such, ACO complements the ILR metric by evaluating whether the adaptively wider intervals assigned to outliers are indeed well-calibrated.
\end{itemize}

\subsection{Additional results}
\label{sec:additional}
Table~\ref{tab:realdata} summarizes the dataset details. Quantile regression results are reported in Tables~\ref{tab:amc}, \ref{tab:il}, \ref{tab:pcorr}, and \ref{tab:coverage_outlier}, while standard regression results appear in Tables~\ref{tab:amc_reg}, \ref{tab:il_reg}, and \ref{tab:pcorr_reg}. Additional metrics on outlier-to-inlier interval length ratios and outlier coverage are presented in Tables~\ref{tab:coverage_outlier} (quantile regression), \ref{tab:interval_ratio_reg}, and \ref{tab:coverage_outlier_reg} (regression).

\begin{table*}[ht]
\caption{Summary of the datasets used in this paper, including the number of samples ($n$), features ($p$), and access links.}
\label{tab:realdata}
\centering
\begin{adjustbox}{max width=0.85\textwidth}
\begin{tabular}{lccccccccccccp{15mm}}
\hline
\textbf{Dataset} & n & p & Source& \textbf{Dataset} & n & p &Source  \\
\hline
Airfoil & 1503  & 5  & \href{https://archive.ics.uci.edu/dataset/291/airfoil+self+noise}{Airfoil (UCI)} & Protein  & 45730 & 8 & \href{http://archive.ics.uci.edu/dataset/265/physicochemical+properties+of+protein+tertiary+structure}{Protein (UCI)}   \\

Bike & 10885 & 12 & \href{https://www.kaggle.com/code/rajmehra03/bike-sharing-demand-rmsle-0-3194/input?select=train.csv}{Bike (Kaggle)} & Star & 2161  & 48 & \href{https://dataverse.harvard.edu/dataset.xhtml?persistentId=doi:10.7910/DVN/SIWH9F}{Star (Harvard Dataverse)} \\

Concrete & 1030  & 8 & \href{https://archive.ics.uci.edu/dataset/165/concrete+compressive+strength}{Concrete (UCI)} & SuperConductivity & 21263 & 81 & \href{http://archive.ics.uci.edu/ml/datasets/Superconductivty+Data}{Superconductivity (UCI)} \\

Cycle & 9568 & 4 & \href{http://archive.ics.uci.edu/dataset/294/combined+cycle+power+plant}{Cycle (UCI)}  & Wave Energy Converter & 54000  & 49 & {\href{https://archive.ics.uci.edu/dataset/882/large-scale+wave+energy+farm}{WEC (UCI)}}  \\

Homes & 21613 & 17  & \href{https://www.kaggle.com/datasets/harlfoxem/housesalesprediction}{Home (Kaggle)} & Winered & 4898  & 11 & \href{https://archive.ics.uci.edu/dataset/186/wine+quality}{Wine red (UCI)} \\

Eletric & 10000 & 12  & \href{http://archive.ics.uci.edu/ml/datasets/Electrical+Grid+Stability+Simulated+Data+}{Electric (UCI)}  & WineWhite& 1599  & 11 & \href{https://archive.ics.uci.edu/dataset/186/wine+quality}{Wine white (UCI)}  \\

Meps19& 15781  & 141 & Meps19 \href{https://meps.ahrq.gov/mepsweb/data_stats/download_data_files_detail.jsp?cboPufNumber=HC-181}{(AHRQ site)}) & &   & &  \\
\hline
\end{tabular}
\end{adjustbox}
\end{table*}

\label{sec::additional_res}
\begin{table*}[h]
\caption{Quantile regression Mean Average Coverage values across different methods and datasets. The reported values represent the average over 50 runs, with two times the standard deviation in parentheses. As expected for conformal methods, all approaches achieve marginal coverage close to the nominal level of 0.9}
\label{tab:amc}
\centering
\begin{adjustbox}{max width=\textwidth}
\begin{tabular}{lccccccc}
\hline
\textbf{Dataset}  & \textbf{EPIC-BART} & \textbf{EPIC-GP} & \textbf{EPIC-MDN} & \textbf{CQR}  & \textbf{CQR-r} & \textbf{UACQR-P} & \textbf{UACQR-S} \\ \hline
airfoil           & 0.9 (0.008)        & 0.9 (0.01)       & 0.896 (0.01)      & 0.901 (0.007) & 0.901 (0.007)  & 0.907 (0.009)    & 0.9 (0.007)      \\
bike              & 0.9 (0.003)        & 0.898 (0.003)    & 0.899 (0.003)     & 0.899 (0.002) & 0.899 (0.002)  & 0.9 (0.002)      & 0.9 (0.002)      \\
concrete          & 0.905 (0.008)      & 0.9 (0.009)      & 0.898 (0.01)      & 0.897 (0.007) & 0.897 (0.007)  & 0.914 (0.012)    & 0.895 (0.007)    \\
cycle             & 0.899 (0.003)      & 0.9 (0.003)      & 0.898 (0.004)     & 0.901 (0.003) & 0.902 (0.003)  & 0.901 (0.002)    & 0.9 (0.002)      \\
electric          & 0.9 (0.003)        & 0.901 (0.003)    & 0.902 (0.004)     & 0.901 (0.002) & 0.901 (0.002)  & 0.901 (0.002)    & 0.901 (0.002)    \\
homes             & 0.902 (0.003)      & 0.902 (0.003)    & 0.9 (0.003)       & 0.901 (0.002) & 0.901 (0.002)  & 0.901 (0.002)    & 0.901 (0.002)    \\
meps19            & 0.9 (0.003)        & 0.9 (0.003)      & 0.9 (0.003)       & 0.899 (0.002) & 0.899 (0.002)  & 0.901 (0.002)    & 0.899 (0.002)    \\
protein           & 0.897 (0.003)      & 0.897 (0.003)    & 0.897 (0.003)     & 0.9 (0.001)   & 0.9 (0.001)    & 0.901 (0.001)    & 0.9 (0.001)      \\
star              & 0.902 (0.006)      & 0.902 (0.006)    & 0.903 (0.006)     & 0.902 (0.004) & 0.901 (0.004)  & 0.93 (0.013)     & 0.902 (0.004)    \\
superconductivity & 0.898 (0.004)      & 0.898 (0.003)    & 0.898 (0.003)     & 0.9 (0.002)   & 0.9 (0.002)    & 0.9 (0.001)      & 0.9 (0.002)      \\
WEC               & 0.897 (0.003)      & 0.899 (0.003)    & 0.897 (0.004)     & 0.9 (0.001)   & 0.9 (0.001)    & 0.899 (0.001)    & 0.9 (0.001)      
\\

winered           & 0.906 (0.008)      & 0.905 (0.008)    & 0.904 (0.007)     & 0.897 (0.006) & 0.897 (0.006)  & 0.903 (0.009)    & 0.897 (0.006)    \\
winewhite         & 0.901 (0.004)      & 0.9 (0.005)      & 0.9 (0.004)       & 0.898 (0.003) & 0.898 (0.003)  & 0.908 (0.009)    & 0.898 (0.003)    \\ \hline
\end{tabular}
\end{adjustbox}
\end{table*}

\begin{table}[h]
\caption{Quantile regression Interval Length values across different methods and datasets. The reported values represent the average over 50 runs, with two times the standard deviation in parentheses. Bolded values indicate the best-performing method within a 95\% confidence interval. Overall, \ourmethod{} consistently produces narrower intervals in most cases.}
\label{tab:il}
\centering
\begin{adjustbox}{max width=\textwidth}
\begin{tabular}{lccccccc}
\hline
\textbf{Dataset}  & \textbf{EPIC-BART}      & \textbf{EPIC-GP}        & \textbf{EPIC-MDN}       & \textbf{CQR}            & \textbf{CQR-r}           & \textbf{UACQR-P}        & \textbf{UACQR-S}        \\ \hline
airfoil           & 16.521 (0.18)           & \textbf{16.395 (0.237)} & \textbf{16.02 (0.222)}  & 17.087 (0.127)          & 17.09 (0.124)            & 18.838 (0.367)          & 16.656 (0.386)          \\
bike $\times (10^1)$             & 37.042 (0.190)          & 38.413 (0.229)          & \textbf{36.250 (0.218)} & 41.164 (0.150)          & 41.125 (0.153)           & 43.627 (0.591)          & 37.415 (0.386)          \\
concrete          & \textbf{36.537 (0.375)} & 38.328 (0.618)          & 37.614 (0.651)          & 39.477 (0.353)          & 39.486 (0.36)            & 44.425 (1.307)          & 39.853 (1.536)          \\
cycle             & \textbf{30.975 (0.128)} & 31.587 (0.132)          & \textbf{30.714 (0.146)} & 35.235 (0.095)          & 35.346 (0.093)           & 38.292 (0.195)          & 31.045 (0.207)          \\
electric          & 0.088 (0.001)           & 0.084 (0.001)           & \textbf{0.072 (0.001)}  & 0.09 (0.001)            & 0.09 (0.001)             & 0.097 (0.001)           & 0.084 (0.001)           \\
homes $\times (10^5)$ & 5.888 (0.028)           & 5.739 (0.028)           & 5.816 (0.040)           & 6.313 (0.024)           & 6.259 (0.024)            & 6.750 (0.0302)          & \textbf{5.312 (0.0309)} \\
meps19            & 32.996 (0.314)          & 29.268 (0.262)          & 29.16 (0.266)           & 28.948 (0.249)          & 28.949 (0.249)           & \textbf{27.857 (0.314)} & 32.763 (0.815)          \\
protein           & 16.195 (0.039)          & 16.485 (0.052)          & \textbf{16.048 (0.034)} & 16.378 (0.011)          & 16.378 (0.011)           & 16.797 (0.019)          & 16.356 (0.017)          \\
star $\times (10^1)$             & \textbf{81.851 (1.029)} & \textbf{81.760 (1.042)} & \textbf{82.050 (1.083)} & \textbf{81.359 (0.508)} & \textbf{81.396 (0.5117)} & 82.521 (0.618)          & 83.253 (0.952)          \\
superconductivity & 66.906 (0.205)          & 70.631 (0.366)          & \textbf{64.805 (0.197)} & 69.51 (0.144)           & 69.482 (0.145)           & 78.478 (0.7)            & 67.046 (0.492)          \\
WEC $\times (10^5)$              & 2.401 (0.0111)          & 2.076 (0.011)          & \textbf{1.890 (0.009)}  & 2.708 (0.004)           & 2.709 (0.004)            & 2.843 (0.008)           & 2.547 (0.007)           \\

winered           & 2.098 (0.034)           & 2.096 (0.035)           & 2.11 (0.031)            & \textbf{1.906 (0.011)}  & \textbf{1.902 (0.01)}    & 2.031 (0.025)           & 2.077 (0.042)           \\
winewhite         & 2.31 (0.017)            & 2.212 (0.029)           & 2.253 (0.016)           & \textbf{2.12 (0.006)}   & \textbf{2.121 (0.006)}   & \textbf{2.124 (0.012)}  & 2.222 (0.017)           \\ \hline
\end{tabular}
\end{adjustbox}
\end{table}

\begin{table}[h]
\caption{Quantile regression Pearson correlation values across different methods and datasets. The reported values represent the average over 50 runs, with two times the standard deviation in parentheses. Bolded values indicate the best-performing method within a $95\%$ confidence interval. Overall, \ourmethod{} exhibits low correlation in most cases, reflecting strong conditional coverage performance.}
\label{tab:pcorr}
\centering
\begin{adjustbox}{max width = 0.925\textwidth}
\begin{tabular}{lccccccc}
\hline
\textbf{Dataset}  & \textbf{EPIC-BART}     & \textbf{EPIC-GP}       & \textbf{EPIC-MDN}      & \textbf{CQR}           & \textbf{CQR-r}         & \textbf{UACQR-P}       & \textbf{UACQR-S}       \\ \hline
airfoil           & \textbf{0.06 (0.013)}  & 0.18 (0.016)           & \textbf{0.071 (0.016)} & 0.125 (0.017)          & 0.129 (0.017)          & 0.132 (0.033)          & 0.108 (0.02)           \\
bike              & 0.171 (0.013)          & 0.138 (0.013)          & 0.213 (0.012)          & \textbf{0.062 (0.013)} & \textbf{0.069 (0.013)} & 0.108 (0.016)          & 0.091 (0.012)          \\
concrete          & 0.101 (0.021)          & 0.147 (0.02)           & \textbf{0.068 (0.013)} & \textbf{0.081 (0.017)} & \textbf{0.082 (0.017)} & 0.121 (0.034)          & \textbf{0.088 (0.02)}  \\
cycle             & \textbf{0.046 (0.008)} & \textbf{0.045 (0.009)} & 0.085 (0.01)           & 0.27 (0.011)           & 0.292 (0.011)          & 0.255 (0.011)          & 0.192 (0.012)          \\
electric          & \textbf{0.023 (0.006)} & \textbf{0.044 (0.009)} & 0.159 (0.008)          & 0.075 (0.006)          & 0.071 (0.006)          & 0.123 (0.01)           & 0.134 (0.008)          \\
homes             & 0.122 (0.007)          & 0.143 (0.01)           & \textbf{0.048 (0.008)} & 0.126 (0.008)          & 0.15 (0.007)           & 0.271 (0.008)          & 0.221 (0.007)          \\
meps19            & \textbf{0.017 (0.004)} & 0.084 (0.01)           & 0.069 (0.009)          & 0.08 (0.006)           & 0.08 (0.006)           & 0.128 (0.006)          & 0.051 (0.006)          \\
protein           & \textbf{0.031 (0.004)} & 0.15 (0.01)            & 0.073 (0.004)          & 0.094 (0.003)          & 0.094 (0.003)          & 0.116 (0.004)          & 0.094 (0.004)          \\
star              & 0.07 (0.012)           & \textbf{0.042 (0.009)} & 0.085 (0.012)          & \textbf{0.046 (0.009)} & \textbf{0.047 (0.01)}  & \textbf{0.048 (0.011)} & \textbf{0.041 (0.009)} \\
superconductivity & 0.167 (0.006)          & 0.217 (0.008)          & \textbf{0.034 (0.007)} & 0.069 (0.007)          & 0.073 (0.006)          & 0.137 (0.008)          & 0.088 (0.005)          \\
WEC               & \textbf{0.122 (0.004)} & \textbf{0.119 (0.006)} & 0.149 (0.005)          & 0.136 (0.005)          & 0.147 (0.005)          & \textbf{0.13 (0.007)}  & 0.132 (0.005)          \\
winered           & \textbf{0.062 (0.013)} & \textbf{0.085 (0.016)} & \textbf{0.058 (0.011)} & 0.113 (0.015)          & 0.114 (0.016)          & 0.097 (0.019)          & \textbf{0.076 (0.016)} \\
winewhite         & \textbf{0.068 (0.016)} & 0.13 (0.016)           & \textbf{0.072 (0.012)} & 0.147 (0.011)          & 0.147 (0.011)          & 0.156 (0.014)          & 0.099 (0.011)          \\ \hline
\end{tabular}
\end{adjustbox}
\end{table}

\begin{table}[h]
    \caption{Average coverage on outliers across methods and datasets in the quantile regression setting. This table reports the average prediction interval coverage for outlier observations—identified as data-sparse points—across 50 runs. Bold values indicate the best-performing method within a 95\% confidence interval. \ourmethod{} consistently delivers near-nominal coverage across most datasets.}
\label{tab:coverage_outlier}
\centering
\begin{adjustbox}{max width = 0.925\textwidth}
\begin{tabular}{lccccccc}
\hline
\textbf{Dataset}  & \textbf{EPIC-BART}     & \textbf{EPIC-GP}       & \textbf{EPIC-MDN}      & \textbf{CQR}           & \textbf{CQR-r}         & \textbf{UACQR-P}       & \textbf{UACQR-S}       \\ \hline
airfoil           & \textbf{0.894 (0.029)} & \textbf{0.875 (0.032)} & \textbf{0.902 (0.024)} & \textbf{0.877 (0.031)} & \textbf{0.876 (0.032)} & \textbf{0.875 (0.033)} & \textbf{0.872 (0.031)} \\
bike              & \textbf{0.885 (0.009)} & \textbf{0.9 (0.011)}   & \textbf{0.894 (0.011)} & \textbf{0.882 (0.01)}  & \textbf{0.882 (0.01)}  & \textbf{0.887 (0.009)} & \textbf{0.885 (0.009)} \\
concrete          & \textbf{0.824 (0.039)} & \textbf{0.855 (0.034)} & \textbf{0.875 (0.038)} & \textbf{0.845 (0.035)} & \textbf{0.844 (0.036)} & \textbf{0.878 (0.032)} & \textbf{0.845 (0.034)} \\
cycle             & \textbf{0.907 (0.009)} & \textbf{0.908 (0.01)}  & \textbf{0.906 (0.011)} & \textbf{0.9 (0.009)}   & \textbf{0.901 (0.009)} & \textbf{0.909 (0.01)}  & \textbf{0.906 (0.009)} \\
electric          & \textbf{0.887 (0.01)}  & \textbf{0.887 (0.009)} & \textbf{0.895 (0.009)} & \textbf{0.888 (0.009)} & \textbf{0.888 (0.009)} & \textbf{0.889 (0.01)}  & \textbf{0.889 (0.009)} \\
homes             & \textbf{0.89 (0.007)}  & \textbf{0.885 (0.007)} & \textbf{0.895 (0.008)} & \textbf{0.883 (0.007)} & \textbf{0.88 (0.007)}  & 0.864 (0.009)          & 0.872 (0.007)          \\
meps19            & \textbf{0.904 (0.008)} & \textbf{0.903 (0.008)} & \textbf{0.899 (0.009)} & \textbf{0.897 (0.008)} & \textbf{0.897 (0.008)} & \textbf{0.9 (0.008)}   & \textbf{0.897 (0.008)} \\
protein           & \textbf{0.893 (0.005)} & \textbf{0.895 (0.005)} & \textbf{0.896 (0.005)} & \textbf{0.894 (0.004)} & \textbf{0.894 (0.004)} & \textbf{0.894 (0.005)} & \textbf{0.894 (0.004)} \\
star              & 0.867 (0.016)          & 0.871 (0.017)          & \textbf{0.876 (0.016)} & \textbf{0.875 (0.018)} & \textbf{0.875 (0.018)} & \textbf{0.911 (0.021)} & 0.87 (0.018)           \\
superconductivity & 0.863 (0.008)          & 0.872 (0.009)          & 0.879 (0.008)          & \textbf{0.913 (0.006)} & \textbf{0.912 (0.006)} & 0.932 (0.005)          & \textbf{0.912 (0.006)} \\
WEC               & \textbf{0.901 (0.006)} & 0.864 (0.008)          & 0.86 (0.009)           & 0.927 (0.005)          & 0.926 (0.005)          & 0.917 (0.005)          & 0.921 (0.005)          \\
winered           & \textbf{0.875 (0.029)} & \textbf{0.876 (0.03)}  & \textbf{0.892 (0.026)} & \textbf{0.846 (0.032)} & \textbf{0.847 (0.032)} & \textbf{0.861 (0.033)} & \textbf{0.848 (0.032)} \\
winewhite         & \textbf{0.86 (0.015)}  & \textbf{0.854 (0.016)} & \textbf{0.868 (0.014)} & \textbf{0.847 (0.016)} & \textbf{0.847 (0.016)} & \textbf{0.865 (0.02)}  & \textbf{0.861 (0.016)} \\ \hline
\end{tabular}
\end{adjustbox}
\end{table}

\begin{table*}[h]
\caption{Regression Mean Average Coverage values across different methods and datasets. The reported values represent the average over 50 runs, with two times the standard deviation in parentheses. As expected for conformal methods, all approaches maintain marginal coverage close to the nominal level of 0.9.}
\label{tab:amc_reg}
\centering
\begin{adjustbox}{max width = 0.85\textwidth}
\begin{tabular}{lcccccc}
\hline
\textbf{Dataset}  & \textbf{EPIC-BART} & \textbf{EPIC-GP} & \textbf{EPIC-MDN} & \textbf{Mondrian} & \textbf{Reg-split} & \textbf{Weighted} \\ \hline
airfoil           & 0.897 (0.01)      & 0.9 (0.008)     & 0.897 (0.009)    & 0.906 (0.006)     & 0.897 (0.007)      & 0.9 (0.007)       \\
bike              & 0.901 (0.003)     & 0.903 (0.003)   & 0.898 (0.003)    & 0.904 (0.002)     & 0.899 (0.002)      & 0.9 (0.002)       \\
concrete          & 0.897 (0.009)     & 0.902 (0.011)   & 0.907 (0.009)    & 0.929 (0.006)     & 0.901 (0.008)      & 0.896 (0.008)     \\
cycle             & 0.898 (0.004)     & 0.9 (0.004)     & 0.896 (0.004)    & 0.905 (0.003)     & 0.898 (0.003)      & 0.9 (0.002)       \\
electric          & 0.899 (0.003)     & 0.9 (0.003)     & 0.896 (0.003)    & 0.905 (0.002)     & 0.899 (0.003)      & 0.901 (0.003)     \\
homes             & 0.9 (0.003)       & 0.899 (0.003)   & 0.9 (0.004)      & 0.902 (0.002)     & 0.901 (0.002)      & 0.9 (0.002)       \\
meps19            & 0.899 (0.003)     & 0.9 (0.003)     & 0.897 (0.003)    & 0.902 (0.006)     & 0.9 (0.002)        & 0.9 (0.002)       \\
protein           & 0.9 (0.003)       & 0.901 (0.003)   & 0.899 (0.002)    & 0.9 (0.001)       & 0.9 (0.001)        & 0.899 (0.001)     \\
star              & 0.903 (0.005)     & 0.9 (0.006)     & 0.906 (0.006)    & 0.913 (0.005)     & 0.903 (0.004)      & 0.9 (0.004)       \\
superconductivity & 0.901 (0.003)     & 0.901 (0.003)   & 0.9 (0.004)      & 0.901 (0.002)     & 0.899 (0.002)      & 0.899 (0.002)     \\
WEC & 0.901 (0.002) & 0.898 (0.003) & 0.899 (0.003) & 0.9 (0.001) & 0.899 (0.001) & 0.9 (0.001) \\
winered           & 0.898 (0.009)     & 0.9 (0.008)     & 0.895 (0.008)    & 0.91 (0.005)      & 0.903 (0.006)      & 0.895 (0.005)     \\
winewhite         & 0.902 (0.004)     & 0.904 (0.004)   & 0.901 (0.004)    & 0.911 (0.003)     & 0.9 (0.004)        & 0.899 (0.003)     \\ \hline
\end{tabular}
\end{adjustbox}
\end{table*}

\begin{table*}[h]
\caption{Regression Interval length values across different methods and datasets. The reported values represent the average over 50 runs, with two times the standard deviation in parentheses. Bold values indicate the best-performing method within a $95\%$ confidence interval. In general, our framework produces narrower intervals in most datasets.}
\label{tab:il_reg}
\centering
\begin{adjustbox}{max width = \textwidth}
\begin{tabular}{lcccccc}
\hline
\textbf{Dataset}  & \textbf{EPIC-BART}                & \textbf{EPIC-GP}         & \textbf{EPIC-MDN}                & \textbf{Mondrian}        & \textbf{Reg-split}       & \textbf{Weighted}        \\ \hline
airfoil           & \textbf{15.099 (0.592)}           & \textbf{15.223 (0.591)}  & \textbf{15.089 (0.56)}           & 16.671 (0.632)           & \textbf{15.325 (0.469)}  & \textbf{15.693 (0.521)}  \\
bike $\times (10^1)$       & \textbf{24.910 (0.302)}           & 26.616 (0.333)           & 25.665 (0.338)                   & 27.263 (0.264)           & 27.634 (0.271)           & 25.865 (0.249)           \\
concrete          & \textbf{39.025 (1.49)}            & \textbf{39.44 (1.575)}   & \textbf{40.475 (1.551)}          & 51.284 (1.904)           & \textbf{39.943 (1.204)}  & 43.053 (1.629)           \\
cycle             & \textbf{14.911 (0.183)}           & \textbf{14.851 (0.199)}  & \textbf{14.712 (0.174)}          & \textbf{15.015 (0.164)}  & \textbf{14.855 (0.159)}  & \textbf{14.833 (0.15)}   \\
electric          & \textbf{0.036 (\textless{}0.001)} & 0.037 (\textless{}0.001) & \textbf{0.036 (\textless 0.001)} & 0.038 (\textless{}0.001) & 0.037 (\textless{}0.001) & 0.037 (\textless{}0.001) \\
homes $\times (10^5)$  & \textbf{3.848 (0.043)}            & \textbf{3.758 (0.049)}   & 4.040 (0.059)                    & 4.235 (0.037)            & 4.014 (0.033)            & 3.984 (0.032)            \\
meps19            & \textbf{25.013 (0.87)}            & \textbf{26.605 (0.774)}  & 32.093 (1.446)                   & 38.904 (1.039)           & 28.899 (0.544)           & 29.555 (0.843)           \\
protein           & 14.572 (0.106)                    & 14.311 (0.12)            & \textbf{13.573 (0.103)}          & 14.102 (0.038)           & 15.261 (0.037)           & \textbf{13.692 (0.037)}  \\
star $\times (10^1)$    & \textbf{85.3 (1.015)}         & \textbf{85.148 (1.326)} & \textbf{86.499 (1.288)}        & 90.539 (0.762)    & \textbf{85.230 (0.792)} & 104.202 (1.306)   \\
superconductivity & \textbf{39.13 (0.419)}            & \textbf{39.283 (0.479)}  & \textbf{39.547 (0.5)}            & 42.204 (0.216)           & 46.14 (0.242)            & 40.115 (0.228)           \\
WEC $\times (10^{5})$ & 0.893 (0.011) & \textbf{0.858(0.011)} & 0.900 (0.012) & 0.903 (0.004) & 0.925 (0.006) & 0.879 (0.006) \\
winered           & \textbf{2.361 (0.065)}            & \textbf{2.37 (0.067)}    & \textbf{2.316 (0.054)}           & 2.576 (0.04)             & \textbf{2.39 (0.037)}    & 2.541 (0.051)            \\
winewhite         & \textbf{2.337 (0.032)}            & \textbf{2.4 (0.032)}     & \textbf{2.356 (0.031)}           & 2.445 (0.013)            & 2.361 (0.014)            & 2.387 (0.015)            \\ \hline
\end{tabular}
\end{adjustbox}
\end{table*}

\begin{table}[h]
\caption{Regression Pearson correlation values across different methods and datasets. The reported values represent the average over 50 runs, with two times the standard deviation in parentheses. Bold values indicate the best-performing method within a $95\%$ confidence interval. The Pearson correlation for Regression Split is omitted, as its constant interval length results in an undefined correlation value. Overall, \ourmethod{} achieves low correlations in most cases, indicating strong conditional coverage performance.}
\label{tab:pcorr_reg}
\centering
\begin{adjustbox}{max width = 0.85\textwidth}
\begin{tabular}{lccccc}
\hline
\textbf{Dataset}  & \textbf{EPIC-BART}     & \textbf{EPIC-GP}      & \textbf{EPIC-MDN}      & \textbf{Mondrian}      & \textbf{Weighted}      \\ \hline
airfoil           & \textbf{0.056 (0.013)} & 0.125 (0.018)         & \textbf{0.054 (0.012)} & 0.148 (0.017)          & 0.124 (0.016)          \\
bike              & 0.164 (0.009)          & 0.172 (0.006)         & 0.054 (0.007)          & \textbf{0.028 (0.005)} & 0.043 (0.007)          \\
concrete          & \textbf{0.064 (0.015)} & 0.116 (0.019)         & \textbf{0.054 (0.011)} & 0.191 (0.02)           & 0.211 (0.022)          \\
cycle             & \textbf{0.022 (0.005)} & 0.075 (0.008)         & \textbf{0.023 (0.005)} & 0.043 (0.006)          & \textbf{0.025 (0.005)} \\
electric          & 0.052 (0.007)          & 0.128 (0.009)         & \textbf{0.024 (0.005)} & 0.047 (0.007)          & \textbf{0.029 (0.006)} \\
homes             & 0.135 (0.007)          & \textbf{0.19 (0.011)} & \textbf{0.019 (0.005)} & \textbf{0.016 (0.003)} & 0.038 (0.005)          \\
meps19            & 0.17 (0.012)           & 0.183 (0.015)         & \textbf{0.034 (0.008)} & \textbf{0.022 (0.006)} & 0.053 (0.016)          \\
protein           & 0.063 (0.003)          & 0.071 (0.005)         & 0.062 (0.003)          & \textbf{0.013 (0.003)} & 0.043 (0.005)          \\
star              & 0.076 (0.012)          & \textbf{0.037 (0.01)} & 0.073 (0.01)           & 0.156 (0.012)          & 0.335 (0.016)          \\
superconductivity & 0.072 (0.006)          & 0.254 (0.005)         & \textbf{0.016 (0.004)} & \textbf{0.019 (0.004)} & \textbf{0.025 (0.006)} \\
WEC &  \textbf{0.012 (0.002)} & 0.115 (0.004) & 0.21 (0.007) & \textbf{0.009 (0.002)} & 0.059 (0.007)\\
winered           & \textbf{0.05 (0.01)}   & 0.119 (0.016)         & \textbf{0.042 (0.009)} & 0.153 (0.018)          & 0.221 (0.019)          \\
winewhite         & \textbf{0.035 (0.007)} & 0.079 (0.011)         & \textbf{0.025 (0.005)} & 0.055 (0.009)          & 0.092 (0.011)          \\ \hline
\end{tabular}
\end{adjustbox}
\end{table}

\begin{table}[h]
\caption{Average outlier-to-inlier interval length ratio across methods and datasets in the regression setting. The table reports the average ratio between prediction interval lengths for outliers and inliers, averaged over 50 runs. Bold values indicate the best-performing method within a 95\% confidence interval. The interval length ration for Regression Split is omitted, as its constant interval length results will always return a ratio of 1. Overall, \ourmethod{} consistently yields higher ratios across diverse datasets.}
\label{tab:interval_ratio_reg}
\centering
\begin{adjustbox}{max width = 0.85\textwidth}
\begin{tabular}{lccccc}
\hline
\textbf{Dataset}  & \textbf{EPIC-BART}     & \textbf{EPIC-GP}       & \textbf{EPIC-MDN}      & \textbf{Mondrian}      & \textbf{Weighted}      \\ \hline
airfoil           & \textbf{1.021 (0.039)} & \textbf{1.004 (0.014)} & \textbf{1.03 (0.035)}  & \textbf{0.988 (0.023)} & \textbf{1.03 (0.045)}  \\
bike              & 0.971 (0.014)          & 0.98 (0.006)           & 0.963 (0.02)           & \textbf{1.029 (0.018)} & 0.976 (0.016)          \\
concrete          & \textbf{1.05 (0.036)}  & 1.03 (0.018)           & \textbf{1.11 (0.035)}  & 0.99 (0.036)           & \textbf{1.137 (0.067)} \\
cycle             & 0.967 (0.007)          & 0.992 (0.002)          & 0.973 (0.008)          & \textbf{1.005 (0.006)} & 0.988 (0.005)          \\
electric          & 0.987 (0.008)          & \textbf{1.0 (0.003)}   & 0.992 (0.01)           & \textbf{0.996 (0.005)} & \textbf{0.992 (0.008)} \\
homes             & 1.124 (0.019)          & 1.138 (0.015)          & \textbf{1.268 (0.035)} & \textbf{1.234 (0.026)} & \textbf{1.203 (0.035)} \\
meps19            & \textbf{1.047 (0.027)} & \textbf{1.057 (0.041)} & \textbf{1.097 (0.065)} & \textbf{1.099 (0.05)}  & \textbf{1.074 (0.058)} \\
protein           & 0.999 (0.002)          & 0.996 (0.003)          & \textbf{1.008 (0.004)} & \textbf{1.003 (0.004)} & \textbf{1.0 (0.005)}   \\
star              & \textbf{1.0 (0.007)}   & 1.0 (0.001)            & \textbf{1.005 (0.007)} & \textbf{1.016 (0.011)} & 0.984 (0.025)          \\
superconductivity & 1.129 (0.014)          & 1.061 (0.009)          & 1.107 (0.016)          & \textbf{1.211 (0.014)} & 1.111 (0.017)          \\
WEC               & 1.077 (0.007)          & 1.034 (0.005)          & 1.122 (0.015)          & \textbf{1.2 (0.014)}   & 1.041 (0.007)          \\
winered           & 1.055 (0.019)          & 1.031 (0.009)          & \textbf{1.143 (0.027)} & 1.052 (0.02)           & 1.095 (0.045)          \\
winewhite         & 1.01 (0.007)           & 1.001 (0.001)          & \textbf{1.039 (0.012)} & \textbf{1.058 (0.013)} & 0.997 (0.011)          \\ \hline
\end{tabular}
\end{adjustbox}
\end{table}

\begin{table}[h]
\caption{Average coverage on outliers across methods and datasets in the regression setting. This table reports the
average prediction interval coverage for outlier observations—identified as data-sparse points—across 50 runs. Coverage
values closer to the nominal level of 0.9 reflect better adaptability to epistemic uncertainty. Bold values indicate the
best-performing method within a 95\% confidence interval. \ourmethod{} consistently delivers near-nominal coverage across
most datasets.}
\label{tab:coverage_outlier_reg}
\centering
\begin{adjustbox}{max width = 0.925\textwidth}
\begin{tabular}{lcccccc}
\hline
\textbf{Dataset}  & \textbf{EPIC-BART}     & \textbf{EPIC-GP}       & \textbf{EPIC-MDN}      & \textbf{Mondrian}      & \textbf{Reg-split}     & \textbf{Weighted}      \\ \hline
airfoil           & \textbf{0.896 (0.027)} & \textbf{0.883 (0.028)} & \textbf{0.894 (0.027)} & \textbf{0.91 (0.025)}  & \textbf{0.902 (0.023)} & \textbf{0.903 (0.021)} \\
bike              & \textbf{0.895 (0.01)}  & \textbf{0.901 (0.011)} & \textbf{0.897 (0.01)}  & \textbf{0.909 (0.009)} & \textbf{0.903 (0.01)}  & \textbf{0.895 (0.009)} \\
concrete          & \textbf{0.895 (0.029)} & \textbf{0.902 (0.027)} & \textbf{0.893 (0.028)} & \textbf{0.904 (0.023)} & \textbf{0.873 (0.033)} & \textbf{0.876 (0.029)} \\
cycle             & \textbf{0.89 (0.009)}  & \textbf{0.901 (0.01)}  & \textbf{0.892 (0.008)} & \textbf{0.914 (0.008)} & \textbf{0.901 (0.009)} & \textbf{0.898 (0.009)} \\
electric          & \textbf{0.9 (0.008)}   & \textbf{0.898 (0.01)}  & \textbf{0.902 (0.008)} & \textbf{0.909 (0.008)} & \textbf{0.906 (0.009)} & \textbf{0.911 (0.009)} \\
homes             & 0.869 (0.008)          & 0.873 (0.008)          & \textbf{0.896 (0.006)} & \textbf{0.894 (0.007)} & 0.847 (0.008)          & 0.878 (0.008)          \\
meps19            & \textbf{0.899 (0.007)} & \textbf{0.9 (0.007)}   & \textbf{0.899 (0.008)} & \textbf{0.901 (0.012)} & \textbf{0.894 (0.008)} & \textbf{0.898 (0.007)} \\
protein           & \textbf{0.89 (0.005)}  & \textbf{0.892 (0.005)} & \textbf{0.89 (0.005)}  & \textbf{0.898 (0.004)} & \textbf{0.895 (0.004)} & \textbf{0.892 (0.004)} \\
star              & \textbf{0.892 (0.019)} & \textbf{0.899 (0.019)} & \textbf{0.906 (0.018)} & \textbf{0.899 (0.015)} & \textbf{0.885 (0.02)}  & \textbf{0.879 (0.018)} \\
superconductivity & \textbf{0.891 (0.007)} & 0.872 (0.008)          & \textbf{0.887 (0.008)} & 0.918 (0.005)          & \textbf{0.884 (0.007)} & \textbf{0.888 (0.006)} \\
WEC               & 0.861 (0.006)          & 0.85 (0.007)           & 0.871 (0.006)          & \textbf{0.897 (0.005)} & 0.85 (0.006)           & 0.852 (0.006)          \\
winered           & \textbf{0.87 (0.028)}  & \textbf{0.841 (0.029)} & \textbf{0.881 (0.027)} & \textbf{0.858 (0.024)} & \textbf{0.843 (0.031)} & \textbf{0.846 (0.029)} \\
winewhite         & 0.872 (0.015)          & \textbf{0.876 (0.014)} & \textbf{0.878 (0.016)} & \textbf{0.9 (0.011)}   & 0.863 (0.015)          & 0.867 (0.016)          \\ \hline
\end{tabular}
\end{adjustbox}
\end{table}


\subsection{Base model hyperparameters}
\label{sec::base_model_details}
For the CatBoost quantile regression model \citep{dorogush2018catboost}, we set the number of iterations (trees) to 1,000 and the learning rate to 0.001, enabling early stopping after 50 rounds of no improvement. To mitigate overfitting even further, we limit each tree to a maximum depth of 6. All other parameters follow the default CatBoost settings. In the regression setting, we implemented a Neural Network with three hidden layers, consisting of 64, 32, and 16 neurons, respectively. Each layer utilizes ReLU activation, batch normalization \citep{ioffebatch2015}, and dropout rates \citep{srivastava2014dropout} of 0.2, 0.1, and 0.05, correspondingly. We train the model using a smooth L1 loss, as it provides a balance between mean absolute error (MAE) and mean squared error (MSE), making it more robust to outliers while maintaining stable gradient updates.

For optimization, we utilize the Adam optimizer \citep{kingma2014adam} with an initial learning rate of 0.01. A learning rate scheduler is incorporated to decrease the learning rate by a factor of 0.5 if there is no improvement after 10 epochs, accelerating convergence. All weights are initialized using Xavier normal initialization \citep{kumar2017weight}. We set aside 30\% of the training data for validation and set a batch size of $35$. Training proceeds for a maximum of 750 epochs, with early stopping triggered if there is no improvement on the validation set for 30 consecutive epochs. Additionally, feature scaling and target min-max normalization are applied to ensure stable training.

\section{\ourmethod{} Computational and Methodological Details}
\label{sec::comp_details}
\label{}
This section outlines the specifications of each predictive model used in our framework, discusses the scalability of each approach, and details the trade-offs associated with prior choices.

\subsection{Splitting strategy}
When splitting the calibration set $\mathcal{D}_{\text{cal}}$ for fitting the predictive model and deriving \ourmethod{}'s adaptive cutoffs in disjunct data subsets, we prioritize allocating the majority of data to model training, as cutoff derivation primarily involves a simpler quantile computation. Specifically, for small with $n \leq 3000$, we use 30\% of the calibration samples for cutoff computation. For larger datasets, this allocation is capped at 1000 samples to maintain computational efficiency. This approach ensures that \ourmethod{} achieves both accurate predictive distribution estimates and well-calibrated cutoffs.

\subsection{MDN MC-Dropout details}
\label{sec::mc-dropout-architecture}
The employed Mixture Density Network (MDN) consists of $2$ hidden layers, each with 64 neurons, ReLU activations, and batch normalization \citep{ioffebatch2015} to enhance training stability. The output layer predicts three parameters for each mixture component $k = \{1, \dots, K\}$: membership probability $\pi_k(\cdot)$,  the mixture mean $\mu_k(\cdot)$ and the mixture variance $\sigma^2_k(\cdot)$. We set the number of components to $K= 3$, outputting 9 parameters. The network is trained using the negative log-likelihood loss, applying a softmax activation to the mixture probabilities and a softplus activation to the variance estimates. 

Optimization relies on the Adam optimizer \citep{kingma2014adam} with a learning rate of 0.001 and a step learning rate scheduler that decays by 0.99 every 5 epochs. To model epistemic uncertainty, we incorporate MC dropout \citep{gal2016dropout}, applying a dropout rate of 0.5 to each hidden layer. This setup implicitly corresponds to a diffuse, non-informative prior over the network weights—one that does not favor any specific parameter configuration and assumes equal uncertainty over weights. All network weights are initialized using PyTorch’s default scheme. To monitor generalization, 30\% of the training data is held out for validation. The model is trained for up to 2,000 epochs, with early stopping after 50 epochs without validation improvement. Feature scaling and target normalization are also applied to improve numerical stability and parameter estimation.

Batch size selection is dataset-dependent: $40$ for small datasets $(n < 10000)$, $125$ for medium datasets ($n < 50000$), and $250$ for large-scaled datasets, such as WEC. For the image experiment, we introduce an additional hidden layer with $32$ neurons and maintain a dropout rate of $0.5$, while adjusting the batch size to $135$. To compute the predictive CDF $F(s(\X, Y)|\X, D)$ at $\X = \x$, we generate 500 samples of the mixture parameters using MC dropout forward passes. For each sampled mixture parameter set, score samples are drawn from the Gaussian Mixture Model. The final predictive CDF is obtained by computing the empirical distribution of $s(\X,Y)$ over these score samples.

\subsection{Variational GP details}
For the Variational Gaussian Process (GP), we implement the model using the \textit{gpytorch} package \citep{gardner2018gpytorch} and PyTorch. The GP prior is specified with a constant mean function and a Radial Basis Function (RBF) kernel, which provides a smoothness assumption but remains essentially non-informative in the sense that it does not encode strong prior beliefs about the function values or parameters. To approximate the posterior, we employ a Natural Variational Distribution \citep{salimbeni2018natural}, using 15 inducing points for small datasets ($n < 10000$) and $50$ for medium and large-scale data. Training is performed by minimizing the negative variational ELBO, where the Natural Gradient Descent \citep{salimbeni2018natural} updates the variational parameters, while the Adam optimizer \citep{kingma2014adam} refines the GP kernel and noise variance hyperparameters. 

The model is trained for up to 2000 epochs, with early stopping triggered after 50 epochs of no improvement. Following the MDN predictive model,  we reserve 30\% of the training data exclusively for validation and adopt an adaptive batch size of $40$ for small datasets ($n < 10000$), $125$ for medium-sized datasets ($n < 50000$), and $250$ for large datasets. To ensure numerical stability, we apply feature scaling and target normalization. Using the learned variational gaussian posterior, we easily derive the predictive CDF $F(s(\X, Y)|\X, D)$ at $\X = \x$ by using gaussian conjugacy.

\subsection{BART details}
\label{sec::bart_details}
For the Bayesian Additive Regression Tree (BART) model, we use the base implementation from \textit{pymc3} \citep{quiroga2022bayesian} and adopt the heteroscedastic variant \citep{pratola2020heteroscedastic}. The conformal scores are modeled as a normal distribution, where the mean is determined by the sum of regression trees, and the variance depends on $\X$.  We configure the model with 100 trees and retain the default prior settings provided by \textit{pymc3}, which correspond to a non-informative prior favoring deeper trees. This choice allows for flexible modeling without imposing strong structural constraints. Additionally, we apply target normalization to improve numerical stability and enhance posterior inference.

After obtaining BART posterior samples via MCMC \citep{chipman2010bart}, we derive the predictive CDF $F(S(\X, Y)|\X, D)$ for a given $\X = \x$ by simulating scores from the postulated distribution using the posterior samples. The corresponding empirical CDF for the score $S(\X, Y)$ is then computed. In the regression example presented in Figure \ref{fig::reg_split} and detailed in Appendix \ref{sec::technicalReg}, we used a modified version of BART. In this version, a heteroscedastic gamma distribution was assigned to the score instead of a normal distribution, which was necessary to account for the asymmetry in the regression conformal scores for that example.

\subsection{Impact of Prior Choices}
\label{sec::prior_comparisson}
In this section, we illustrate the impact of prior choices in \ourmethod{} through a simulated experiment comparing the prediction intervals produced by \ourmethod{}-BART under two prior configurations: \textit{diffuse} and \textit{concentrated}. To emulate data sparsity, we follow a similar setup to Section \ref{sec::technicalReg}, simulating a scenario with both high- and low-density regions. Specifically, given a sample size $n$, we draw $\left \lfloor{0.8 \cdot n}\right \rfloor$ samples from $X \sim U(-1, 0)$, representing the denser region. The remaining $\left \lfloor{0.2 \cdot n}\right \rfloor$ samples are generated from $X \sim \text{Beta}(5, 5)$, concentrating points in the intermediate region of the interval [0,1]. The response variable follows a heteroscedastic distribution $Y \sim N\left(\I\{x > 0\}, 0.05 + (\sin^2(15 \cdot x) \cdot \I\{x > 0\})\right)$, which results in low aleatoric and epistemic uncertainty over the densely sampled interval $[-1, 0]$ and high uncertainty—both epistemic and aleatoric—over the sparse interval $[0,1]$.

The dataset is divided into training, calibration, and test sets, each containing 200 samples. As the base predictor, we use a k-nearest neighbors (k-NN) regression model with  $k = 10$, trained on the training set. For \ourmethod{}, both BART configurations share the same settings: $m = 50$ trees, heteroscedastic variance, a normal likelihood, response normalization, and $1000$ Monte Carlo samples. The key difference lies in the prior hyperparameter $\beta$,which governs tree depth by controlling the probability that a node is non-terminal. A higher value, such as $\beta = 0.9$ yields a diffuse prior with deeper trees and greater model flexibility, while a lower value like $\beta = 0.1$ results in a concentrated prior with shallower trees, imposing stronger regularization. 

Figure \ref{fig:difused_versus_concentrated_priors} illustrates how the prediction intervals behave under each prior configuration.
\begin{figure}[h]
    \centering
    \includegraphics[width=0.625\linewidth]{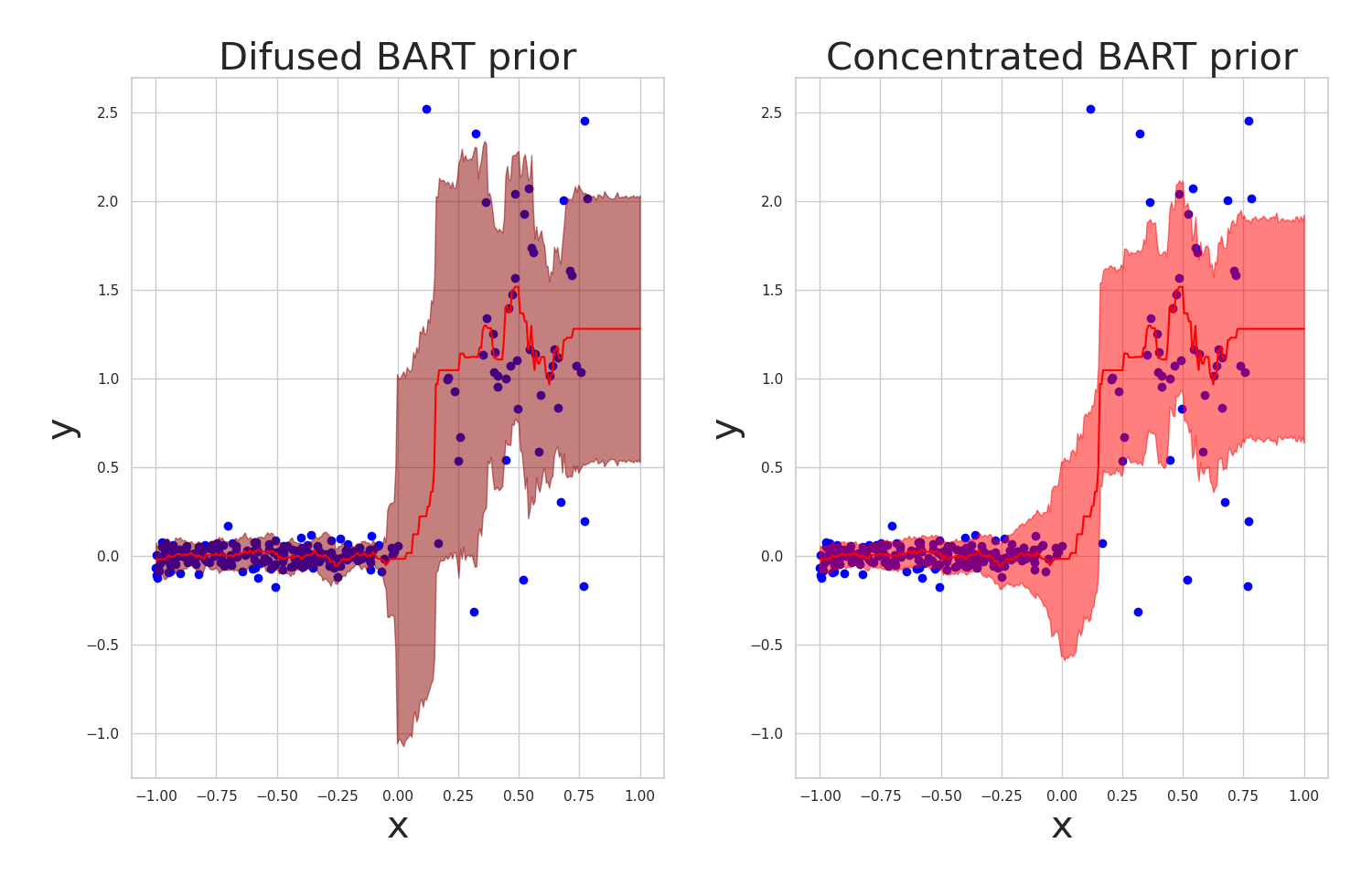}
    \caption{
    Comparison of prediction regions produced by \ourmethod{} using BART under two prior configurations: a diffuse prior and a concentrated prior. While the diffuse prior produces wider prediction regions—particularly in sparse areas—due to greater model flexibility, the concentrated prior yields narrower intervals reflecting stronger regularization. This highlights the sensitivity of \ourmethod{} to prior specification.
    }
    \label{fig:difused_versus_concentrated_priors}
\end{figure}
As shown, the diffuse prior yields wider intervals—particularly in data-sparse regions—capturing heightened epistemic uncertainty. In contrast, the concentrated prior produces tighter intervals in denser regions, reflecting its stronger regularization and more confident assumptions in well-populated areas. This highlights the sensitivity of \ourmethod{} to prior specification and demonstrates how prior concentration directly influences the adaptiveness and sharpness of the prediction sets.

To further investigate this trade-off, we also evaluate the influence of $\beta$ using the AISL metric, which jointly accounts for coverage and interval length. As shown in Figure \ref{fig:AISL_versus_alpha}, higher values of $\beta$ lead to deeper trees and wider intervals, while smaller values yield more concise intervals due to stronger regularization.
\begin{figure}[h]
    \centering
    \includegraphics[width=0.675\linewidth]{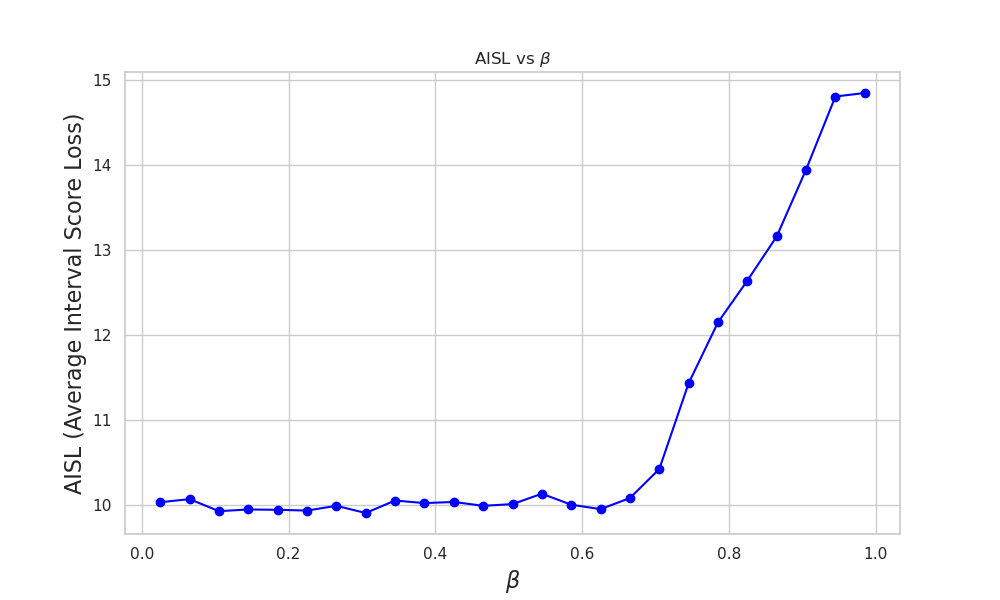}
    \caption{Variation of the \textbf{Average Interval Score Loss} (AISL) as a function of the $\beta$ hyperparameter in the BART model used within the \ourmethod{} framework. Lower AISL values indicate more favorable balance, suggesting that tuning $\beta$ can significantly affect performance by controlling model flexibility and prior regularization.}
    \label{fig:AISL_versus_alpha}
\end{figure}

This analysis helps reveal the trade-off between interval regularization and accurate representation of epistemic uncertainty. Crucially, it also provides a foundation for tuning prior hyperparameters more systematically. By optimizing AISL—or another relevant performance metric—one can select prior configurations in a principled manner. In more complex scenarios involving multiple hyperparameters, this process remains scalable through the use of Bayesian optimization techniques \citep{frazier2018bayesian}.

\subsection{Scalability comparisson}
\label{sec::computational_complexity}
Incorporating Bayesian models into conformal prediction offers a principled way to capture epistemic uncertainty but may introduce additional computational overhead. To quantify this cost within our framework, we evaluate the runtime of \ourmethod{} across increasing sample sizes using the three Bayesian predictive models discussed previously: MDN, BART, and Variational GP. We consider a synthetic regression task with 30 input features and one response variable, generated using scikit-learn’s \texttt{make\_regression} function \citep{pedregosa2011scikit}. For each sample size  (\(n = 1000, 2000, 5000, 10000, 20000, 50000\)), we generate training and calibration sets of equal size ($n$), train a k-nearest neighbors (k-NN) model with $k = 30$ as the base predictor on the training data, and fit the predictive models on the calibration data. Each setting is repeated 10 times, and Figure~\ref{fig:running_time_versus_n} reports the average runtime along with its standard deviation for each sample size.
\begin{figure}[h]
    \centering
    \includegraphics[width=0.65\linewidth]{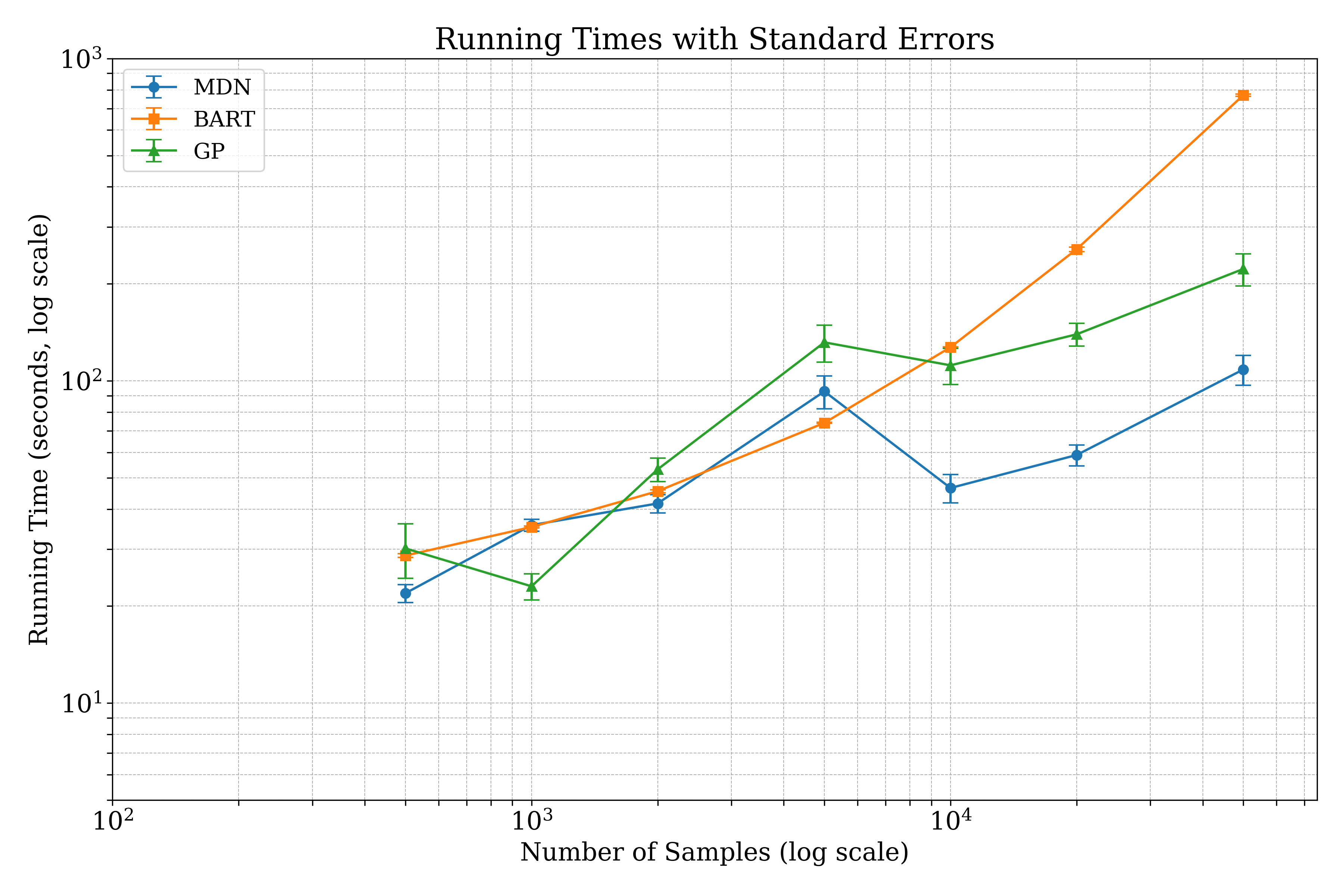}
    \caption{Computational cost of \ourmethod{} as a function of sample size $n$ (log scale), using the three Bayesian predictive models: BART, MDN and Variational GP. While all three models show comparable running times at moderate scales(e.g., $n = 10000$),  MDNs exhibit significantly better scalability and faster execution as $n$ increases, underscoring their advantage in large-scale settings.}
    \label{fig:running_time_versus_n}
\end{figure}

The MDN configuration includes two hidden layers (64–64 units), ReLU activations, 50\% dropout, $2000$ training epochs, a learning rate of $0.001$, and batch size dynamically scaled to the sample size (ranging from $35$ to $450$). Inputs are standardized, and the response is normalized. BART is configured with $50$ trees, normal heteroscedastic errors, $500$ posterior samples, and parallelization using $6$ CPU cores. The GP model is trained via a variational approximation, with the number of inducing points dynamically adjusted from $15$ to $150$ based on \(n\), a variational learning rate of $0.1$, a hyperparameter learning rate of $0.01$, and identical batch size and early stopping parameters to the MDN.

Overall, MDNs demonstrate the most stable and scalable runtime across all sample sizes. Variational Gaussian Processes also scale well, though they are generally less efficient than MDNs. The observed drop in runtime between $5000$ and $10000$ samples for both neural network–based models likely stems from the early stopping mechanism, which halts training once convergence is detected. BART, on the other hand, maintains low computational cost for small datasets but exhibits a sharp increase in runtime as the sample size grows. This highlights a key trade-off: while BART is effective for small to medium-scale problems, neural network–based models like MDNs are better suited for larger datasets, where the iterative nature of MCMC-based methods such as BART becomes increasingly expensive.

This experiment was conducted using an AMD Ryzen 7 6000 series CPU. For the largest dataset, which includes over $100000$ calibration observations, the runtime for each method remained small: BART completed in under 16 minutes, MDN in under 2 minutes, and GP in under 4 minutes. These results highlight the computational efficiency of the proposed approaches, particularly the scalability of MDNs and GPs for large-scale applications.

\subsection{Application to density-based conformity scores}
\label{sec::dens_epicscore}
In this section, we use a simulated experiment to illustrate how \ourmethod{} can enhance density-based conformal prediction methods by explicitly incorporating epistemic uncertainty. Following the same simulation setup described in Section~\ref{sec::prior_comparisson}, we compare our approach to a standard method for conformal prediction with a density-based score, namely HPD-split \citep{izbicki2022cd}. The experiment is designed to reflect regions with both high epistemic and aleatoric uncertainty, allowing us to highlight the advantages of our method in capturing complex uncertainty structures.

The base conditional density model is a Mixture Density Network (MDN) consisting of two hidden layers with 64 and 128 neurons, respectively, and a Gaussian mixture output with 5 components. The model incorporates a dropout rate of 0.15 to improve generalization and is trained on 2000 samples. For \ourmethod{}, we use a Bayesian Additive Regression Trees (BART) model with 50 trees, a gamma error distribution, heteroscedastic variance, and a non-informative prior. The tree depth is controlled by a fixed hyperparameter $\beta = 0.9$, promoting deeper trees. Calibration is performed using 1000 posterior samples. The HPD-split prediction region is constructed directly from the MDN’s conditional density estimates. Figure~\ref{fig:HPD_versus_epicscore} compares the prediction regions obtained by the two methods.
\begin{figure}[h]
    \centering
    \includegraphics[width=1\linewidth]{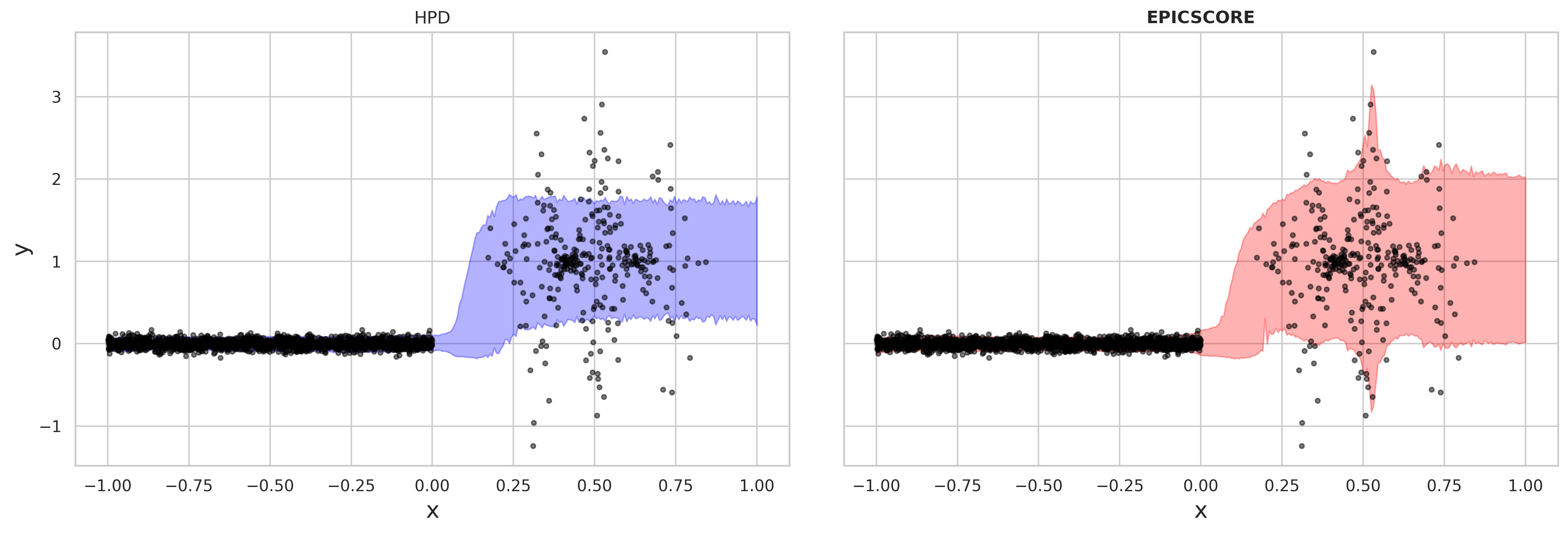}
    \caption{Comparison of prediction regions generated by the HPD (Highest Predictive Density) split \citep{izbicki2022cd} and \ourmethod{} methods in a simulated scenario. Both methods produce similar regions in the high-density area (\(x < 0\)), but \ourmethod{} exhibits sharper adaptivity in the sparse region (\(x > 0\)), offering more informative and expanded predictive sets where epistemic uncertainty is higher.}
    \label{fig:HPD_versus_epicscore}
\end{figure}

Compared to the HPD-based region, our method exhibits significantly better adaptiveness and uncertainty quantification, particularly in data-sparse regions of the input space. At the same time, it preserves tight prediction intervals in denser, data-rich areas. This demonstrates \ourmethod{}’s ability to more effectively capture uncertainty in challenging settings where data is limited, without sacrificing performance in well-populated regions.

\section{Proofs}
\label{sec:proofs}

\begin{proof}[Proof of Theorem \ref{thm::marginal}]
Follows immediately from the fact that  \ourmethod\ is a conformal score and \citet[Theorem 2]{Lei2018}.
\end{proof}


\begin{proof}[Proof of Theorem \ref{thm::conditional_coverage}]

Let $t'_{1-\alpha}$ be the empirical quantile obtained using $s'(\x,y) = F(s(\x,y)|\x,D)$, and let $t''_{1-\alpha}$ be the empirical quantile obtained using $s''(\x,y) = F(s(\x,y)|\x,\theta^*)$. 

By Assumption \ref{assumption:uniform_convergence}, for any $\varepsilon,\delta > 0$, if the calibration set is sufficiently large, then with probability at least $1 - \delta$ (for some event $\Omega$), we have
\[
\sup_{s, \x} \left|F(s \mid \x, D) - F(s \mid \x, \theta^*)\right| \leq \varepsilon,
\]
where the randomness is over $D$.

Let $\hat{\P}$ denote the empirical probability measure based on $\mathcal{D}_{\calib,2}$, i.e., for a given event \( A \) and any function $g$:
\[
\hat{\P}(g(s) \in A) = \frac{1}{|\mathcal{D}_{\calib,2}|} \sum_{(\X_i, Y_i) \in \mathcal{D}_{\calib,2}} \Ind(g(s(\X_i, Y_i)) \in A).
\]

Conditionally on the event $\Omega$, we obtain:
\begin{align*}
   1 - \alpha &\leq \hat{\P}(F(s | \x, D) \leq t'_{1-\alpha})\\
   &\leq \hat{\P}(F(s | \x, \theta^*) - \varepsilon \leq t'_{1-\alpha}),
\end{align*}
which, by the definition of the empirical quantile, implies that 
\[
t''_{1-\alpha} \leq t'_{1-\alpha} + \varepsilon.
\]
By a symmetric argument, we also have:
\begin{align*}
   1 - \alpha &\leq \hat{\P}(F(s | \x, \theta^*) \leq t''_{1-\alpha})\\
   &\leq \hat{\P}(F(s | \x, D) - \varepsilon \leq t''_{1-\alpha}).
\end{align*}
Therefore, conditionally to the event $\Omega$, of probability at least $1 - \delta$, we conclude that
\[
|t'_{1-\alpha} - t''_{1-\alpha}| \leq \varepsilon.
\]

Using this result and the fact that $\left|F(s \mid \x, D) - F(s \mid \x, \theta^*)\right| \leq \varepsilon$, we establish the following bound:
\begin{align*}
    \P(s'(\X,Y) \leq t'_{1-\alpha}|\X) &\leq \P(s'(\X,Y) \leq t'_{1-\alpha} \mid \X,\Omega) \cdot 1 + \delta\\
    & \leq \P(s'(\X,Y) \leq t''_{1-\alpha} + \varepsilon \mid \X,\Omega) + \delta\\
    &\leq \P(s''(\X,Y) - \varepsilon \leq t''_{1-\alpha} + \varepsilon|\X,\Omega) + \delta\\
    &=  \P(s''(\X,Y) \leq t''_{1-\alpha} + 2\varepsilon \mid \X) + \delta\\
    &= t''_{1-\alpha} + 2\varepsilon + \delta.
\end{align*}
In the fourth equality, we used the fact that \( s''(x,y) \) is non-random and that \( t''_{1-\alpha} \) depends only on \( \mathcal{D}_{\calib, 2} \), making it independent of \( \Omega \). In the last equality, we used the fact that the random variable $s''(\X,Y)|\X$ is uniform.

By a similar argument, we can show that:
\[
\left|\P(s'(\X,Y) \leq t'_{1-\alpha}|\X) - t''_{1-\alpha}\right| \leq 2\varepsilon + \delta.
\]
Thus, as $|\mathcal{D}_{\calib,1}| \to \infty$, we can take $\varepsilon, \delta \to 0$, which implies that
\[
\lim_{|\mathcal{D}_{\calib,1}| \to \infty} \P(s'(\X,Y) \leq t'_{1-\alpha}|\X) = t''_{1-\alpha}.
\]
Finally, by \cite[Lemma 2, Section D.2.2]{dheur2025multi}, we know that as $|\mathcal{D}_{\calib,2}| \to \infty$, we have $t''_{1-\alpha} \to 1-\alpha$, which concludes the proof.

\end{proof}


\end{document}